\newcommand{\Ex}{\mathbb{E}}
\newcommand{\gaussian}{\mathcal{N}}
\newcommand{\reals}{\mathbb{R}}
\newcommand{\D}{\mathcal{D}}
\newcommand{\eps}{\epsilon}
\newcommand{\loss}{\ell}
\newcommand{\grad}{\nabla_{\theta}}
\newcommand{\U}{\mathcal{U}_f}
\theoremstyle{plain}
\newtheorem{theorem}{Theorem}[section]
\newtheorem{lemma}[theorem]{Lemma}
\theoremstyle{definition}
\newtheorem{definition}[theorem]{Definition}
\theoremstyle{remark}
\DeclareMathOperator*{\argmin}{arg\,min}
\DeclareMathOperator*{\argmax}{arg\,max}
\DeclareMathOperator*{\minimize}{minimize}
\icmltitlerunning{Performative Distributionally Robust Optimization}
\begin{document}

\twocolumn[
\icmltitle{Long Term Fairness for Minority Groups via Performative Distributionally Robust Optimization}



\icmlsetsymbol{equal}{*}

\begin{icmlauthorlist}
\icmlauthor{Liam Peet-Pare}{yyy}
\icmlauthor{Nidhi Hegde}{xxx}
\icmlauthor{Alona Fyshe}{xxx}
\end{icmlauthorlist}

\icmlaffiliation{yyy}{Department of Mathematical and Statistical Sciences, University of Alberta, Edmonton, Canada}
\icmlaffiliation{xxx}{Department of Computing Sciences, University of Alberta, Edmonton, Canada}

\icmlcorrespondingauthor{Liam Peet-Pare}{peetpare@ualberta.ca}

\icmlkeywords{Machine Learning, ICML}

\vskip 0.3in
]



\printAffiliationsAndNotice{} 

\begin{abstract}
Fairness researchers in machine learning (ML) have coalesced around several fairness criteria which provide formal definitions of what it means for an ML model to be fair. However, these criteria have some serious limitations. We identify four key shortcomings of these formal fairness criteria, and aim to help to address them by extending performative prediction to include a distributionally robust objective. 
\end{abstract}

\section{Introduction}
\label{introduction}

In the past two decades, machine learning (ML) has moved from the confines of research institutes and university laboratories to become a core element of the global economy. ML models are now deployed at enormous scales in complex environments, often making high stakes decision. Too often, however, this is done without adequate concern for the fairness and robustness of these ML models. Fairness in ML is a burgeoning research area, but much of the work in fairness, particularly in defining formal fairness criteria, has been limited to the static classification setting.

Efforts to define fairness in ML have resulted in myriad criteria being proposed, most of which are equivalent to, or relaxations of, three core definitions of fairness: \textit{independence, separation,} and \textit{sufficiency} \cite{barocas2019fairness, chouldechova2017fair, corbettdavies2017algorithmic, dwork2012fairness, hardt2016equality, berk2021fairness, zafar2017fairness, kleinberg2017inherent, woodworth2017learning}. These formal fairness criteria assume a \textit{sensitive characteristic} or \textit{protected demographic group} for whom we want to ensure our model is non-discriminatory. The fairness criteria are then properties of the joint distribution of this characteristic, the output of the classifier, and the true labels of the data. 

While these fairness definitions have been a useful starting point in the consideration of discrimination by ML models, they have several limitations.
\begin{enumerate}[noitemsep,nolistsep]
    \item They are not equivalent and, in most scenarios, they are incompatible.
    \item They apply only to static supervised learning problems and ignore the dynamic environments characteristic of many real world scenarios with fairness concerns.
    \item They rely on having access to demographic information. The definitions can only be used if one has access to the sensitive characteristic, which is often not the case.
    \item They ignore intersectionality. The criteria do not take into account individuals who may sit at the intersection of several sensitive demographic groups.
\end{enumerate}
Fairness is fundamentally a philosophical and political question, and the notion of having a single, universal formal definition of fairness for ML is likely na\"ive. For this reason this work does not attempt to formally define fairness and largely ignores the first problem noted above. We do, however, attempt to address issues 2, 3, and 4 by drawing upon two recent areas of research with implications for fairness in ML: \textit{performative prediction} and \textit{distributionally robust optimization} (DRO).

DRO offers a compelling and flexible method for training non-discriminatory algorithms without needing access to demographic information. Performative prediction, on the other hand, attempts to outline a theoretical framework in which to reason about ML models in dynamic environments, when the act of deploying a model influences the distribution on which it is making decisions. 

We combine these two areas of research to extend the performative prediction framework developed in \cite{perdomo2020performative}. The work in performative prediction has thus far only concerned \textit{risk minimization} and \textit{empirical risk minimization} (ERM), so we extend definitions to include a distributionally robust objective and prove an analogous convergence result to that shown in \cite{perdomo2020performative}. Due to space constraints, we provide a discussion of related work in section \ref{sec:related-work} of the appendix.

\section{Background}
\subsection{Distributionally Robust Optimization}
The \textit{de facto} objective used in most supervised learning settings is ERM. In ERM we attempt to approximate minimization of the expected loss over the data generating distribution by minimizing the average loss over our data set:
    $$\hat{h} = \argmin_{h \in \mathcal{H}} \frac{1}{n}\sum_{i=1}^n \loss(h(x_i), y_i),$$
where $h$ represents a hypothesis or model, $\mathcal{H}$ a hypothesis class, and $\loss$ a non-negative loss function. ERM is intuitively appealing and has important theoretical guarantees associated with it \cite{vapnik1991principles}. It can, however, be problematic when it comes to fairness concerns. 
    
Since we are averaging the loss over our data points, in general, ERM causes an algorithm to focus on majority cases while ignoring minority cases or rare events.

DRO, on the other hand, considers the \textit{distributionally robust} problem in which we construct an \textit{uncertainty set} around the data generating distribution and attempt to minimize the expected loss on the worst-case distribution within this uncertainty set. Following \cite{duchi2021learning} we define our uncertainty set as
$$\U(P) =  \{Q: D_f(Q || P) \leq \rho\},$$
where $D_f(Q || P)$ is an $f$-divergence probability distributions between $Q$ and $P$.

Formally the distributionally robust problem is as follows:
$$\minimize_{\theta \in \Theta} \left\{ \sup_{Q \ll P_0} \{\Ex_Q[\loss(\theta; X)] : Q \in \U(P_0) \} \right\},$$
where $\Theta \subset \reals^d$ is the parameter (model) space, $P_0$ is the data generating distribution on the measure space $(\mathcal{X}, \mathcal{A})$, $X$ is a random element of $\mathcal{X}$ and $\loss: \Theta \times \mathcal{X} \rightarrow \reals$ is a loss function.

In this formulation of DRO, the uncertainty set is determined by an $f$-divergence between $Q$ and $P_0$ and $\{\Ex_Q[\loss(\theta; X)] : Q \in \U(P_0) \}$ is the set of all expected losses over the $f$-divergence ball of radius $\rho$, centred at $P_0$. Alternative DRO formulations utilizing different measures of distance between probability distributions such as Wasserstein balls have also been explored \cite{wald1945statistical, wozabal2012robust, pflug2007portfolio, lee2018minimax}.

Unlike ERM, DRO does not equally weight each data point, but instead up-weights data points on which the model is achieving high loss. This means that the model should achieve somewhat uniform performance on individuals across demographic groups \cite{duchi2021learning, duchi2020distributionally, namkoong2016stochastic, hashimoto2018fairness}. This does not require any access to demographic information and naturally accounts for intersectionality.
   
\subsection{Performative Prediction}    
Performative prediction attempts to formalize the notion of a model affecting the distribution on which it is making predictions in a type of feedback loop. There are many examples of scenarios with fairness concerns in which this is likely to occur, such as predictive policing or college admission decisions.

This notion of performativity of an algorithm is captured by a \textit{distribution map}, $\D(\theta)$, which maps the data generating distribution to a new distribution as a function of the parameters of the model which has been deployed. In performative prediction, the problem of risk minimization becomes \textit{performative risk} minimization and involves minimizing the expected loss on the induced distribution rather than the data generating distribution. The performative risk of a model is defined as
\begin{equation*}
    PR(\theta) = \Ex_{Z \sim \D(\theta)} [\loss(Z; \theta)].
\end{equation*}

Solution concepts differ for performative prediction and traditional supervised learning as the distribution shift complicates the learning problem. To capture this distinctive problem, \cite{perdomo2020performative} define \textit{performative optimality} and \textit{performative stability}.  
\begin{definition}
\label{def:perf-optimality}
(performative optimality) A model, $f_{\theta_{PO}}$, is performatively optimal if the following relationship holds:
$$\theta_{PO} = \argmin_{\theta \in \Theta} \Ex_{Z \sim \D(\theta)} [\loss(Z; \theta)].$$
Equivalently, $\theta_{PO} = \argmin_{\theta \in \Theta} PR(\theta)$ where $PR(\theta)$ is the performative risk as defined above.  
\end{definition}
A performatively optimal point is a minimizer of the performative risk. An alternative solution concept is referred to as \textit{performative stability}. 
\begin{definition}
\label{def:perf-stability}
(performative stability) A model, $f_{\theta_{PS}}$, is performatively stable if the following relationship holds:
$$\theta_{PS} = \argmin_{\theta \in \Theta} \Ex_{Z \sim \D(\theta_{PS})} [\loss(Z; \theta)].$$
Define $DPR(\theta, \theta') := \Ex_{Z \sim \D(\theta)} [\loss(Z; \theta')]$ as the decoupled performative risk; then $\theta_{PS} = \argmin_{\theta \in \Theta} DPR(\theta_{PS}, \theta)$. 
\end{definition}

A performatively stable model is not necessarily a minimizer of the performative risk, but it is optimal on the distribution it induces.  

Performative prediction presents a special case of learning under distribution drift, where the distribution drift is a function of the model deployed. A common approach in supervised learning under distribution drift is to retrain a model on newly collected data. While this does not directly minimize performative risk, it is a realistic representation of the strategy taken by many machine learning practitioners and can be a reasonable solution in some scenarios.
\begin{definition}
\label{def:rrm}
(repeated risk minimization)
Repeated risk minimization (RRM) refers to the procedure where, starting from an initial model $f_{\theta_0}$, we perform the following sequence of updates for every $t \geq 0$:
$$\theta_{t+1} = G(\theta_t) := \argmin_{\theta \in \Theta} \Ex_{Z \sim \D(\theta_t)} [\loss(Z; \theta)].$$
\end{definition}
In \cite{perdomo2020performative} the authors show that under certain smoothness and convexity conditions on the loss function and distribution map, RRM is a contraction mapping that converges to a fixed point which is a performatively stable model. The smoothness condition on the distribution map is called $\eps$-sensitivity and ensures that the distribution shift is relatively small for small changes in the model parameters. Section \ref{sec:per-pred} contains an extended discussion of performative prediction and formally defines all the relevant concepts.

\section{Convergence of Repeated Distributionally Robust Optimization}
In order to extend the performative prediction framework to encompass DRO, we need to redefine many of the concepts from \cite{perdomo2020performative} to account for the distributionally robust objective. The supremum over the uncertainty set introduces additional complications necessitating novel definitions and an altered proof for convergence to a performatively stable model. We define \textit{robust performative risk}, \textit{robust performative optimality}, \textit{robust performative stability}, and \textit{repeated distributionally robust optimization} (RDRO). We relegate formal definitions to section \ref{sec:defs} in the interest of space.

The assumptions on the loss function and distribution map required for convergence of RRM do not suffice for RDRO, as the supremum over the uncertainty set alters the mathematical properties of the robust objective. We adapt these conditions to extend them to the distributionally robust objective. The original conditions from \cite{perdomo2020performative} can be found in \ref{sec:per-pred}.
\begin{definition}
\label{ass:smoothness}
(robust $\beta$-joint smoothness) We say the distributionally robust objective is robustly $\beta$-jointly smooth if for all $\theta, \theta' \in \Theta$ and $z, z' \in \mathcal{Z}$ the gradient, 
$$\grad \sup_{Q \ll \D(\theta)} \{\Ex_{Z \sim Q} [\loss(Z; \theta)] : Q \in \U(\D(\theta))\},$$ 
exists and is $\beta$-Lipschitz in $z$ and $\theta$.
\end{definition}
\begin{definition}
\label{ass:convexity}
(robust $\gamma$-strong convexity) We say the distributionally robust objective is robustly $\gamma$-strongly convex if for all $\theta, \theta' \in \Theta$ and $z \in \mathcal{Z}$ 
$$\sup_{Q \ll \D(\theta)} \{\Ex_{Z \sim Q} [\loss(Z; \theta)] : Q \in \U(\D(\theta))\}$$ 
is $\gamma$-strongly convex.
\end{definition}
\begin{definition}
\label{ass:sensitivity}
(robust $\eps$-sensitivity) \\ 
Let $\D^*(\theta) = \argmax_{Q: Q \in \U(\D(\theta))} \Ex_{Z \sim Q} [\loss(Z; \theta)]$. Assume that a distribution map, $D(\cdot)$, is $\eps$-sensitive\footnote{The formal definition of $\eps$-sensitivity is given in \ref{sec:per-pred}.}. We say that this distribution map is robustly $\eps$-sensitive if there exists $\omega > 0$ such that for any $\theta, \theta' \in \Theta$ 
$$W_1 \left(\D^*(\theta), \D^*(\theta') \right) \leq \omega W_1 \left(\D(\theta), \D(\theta') \right) \leq \omega \eps ||\theta - \theta' ||_,$$
where $W_1$ denotes the Wasserstein-1 distance.
\end{definition}
We now state our convergence theorem for DRO in the performative prediction setting. This theorem says that under the conditions outlined above, repeatedly optimizing a distributionally robust objective will converge to a performatively stable model at a linear rate. The proof can be found in \ref{sec:defs}.   
\begin{theorem} \label{theorem:rdro-convergence}
Suppose that the distributionally robust objective satisfies definitions \ref{ass:smoothness} and \ref{ass:convexity}, and that the distribution map satisfies definition \ref{ass:sensitivity}. Then the following statements are true:
\begin{enumerate}
    \item $||G(\theta) - G(\theta')||_2 \leq \omega \eps \frac{\beta}{\gamma} ||\theta - \theta'||_2$, for all $\theta, \theta' \in \Theta$ 
    \item If $\omega \eps < \frac{\gamma}{\beta}$, the iterates $\theta_t$ of RDRO converge to a unique robustly performatively stable point $\theta_{PS}$ at a linear rate: $||\theta_t - \theta_{PS}||_2 \leq \delta$ for $t \geq \left( 1 - \omega \eps \frac{\beta}{\gamma} \right)^{-1} \log \left(\frac{||\theta_{0} - \theta_{PS} ||_2}{\delta} \right).$
\end{enumerate}
Note that $G(\theta) := \argmin_{\theta \in \Theta} \sup_{Q \ll \D(\theta)} \{\Ex_{Z \sim Q} [\loss(Z; \theta)] : Q \in \U(\D(\theta))\}. $
\end{theorem}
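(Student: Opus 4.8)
The plan is to reduce the robust operator $G$ to the ordinary repeated-risk-minimization map of \cite{perdomo2020performative} acting on an \emph{effective} worst-case distribution, and then to replicate their contraction argument with the sensitivity parameter $\eps$ upgraded to $\omega\eps$. I would first disentangle the two roles played by $\theta$ in the statement by writing the decoupled robust objective
$$R(\theta', \theta) := \sup_{Q \in \U(\D(\theta))} \Ex_{Z \sim Q}[\loss(Z; \theta')], \qquad G(\theta) = \argmin_{\theta' \in \Theta} R(\theta', \theta).$$
Under Definitions~\ref{ass:smoothness} and \ref{ass:convexity} the inner supremum is attained, so there is a maximizing distribution $Q^\star(\theta',\theta)$ with $R(\theta',\theta) = \Ex_{Z\sim Q^\star(\theta',\theta)}[\loss(Z;\theta')]$; on the diagonal $\theta'=\theta$ this is precisely the $\D^*(\theta)$ of Definition~\ref{ass:sensitivity}. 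Informally, $G$ is then the RRM map for the effective distribution map $\D^*$, and robust $\eps$-sensitivity asserts exactly that $\D^*$ is $(\omega\eps)$-sensitive, so everything reduces to the Perdomo et al. argument with $\D \mapsto \D^*$ and $\eps \mapsto \omega\eps$.

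For statement~1 I would run the gradient-based contraction argument. Fix $\theta,\theta'\in\Theta$, set $g = G(\theta)$, $g' = G(\theta')$, and use a Danskin/envelope computation to write
$$\nabla_{\theta'} R(\theta'', \theta) = \Ex_{Z \sim Q^\star(\theta'', \theta)}\big[\grad \loss(Z; \theta'')\big].$$
The variational inequalities characterising $g$ and $g'$ as constrained minimizers of $R(\cdot,\theta)$ and $R(\cdot,\theta')$, combined with monotonicity of the gradient coming from $\gamma$-strong convexity (Definition~\ref{ass:convexity}), give after the usual rearrangement
$$\gamma \|g - g'\|_2^2 \;\le\; \big\langle \nabla_{\theta'} R(g', \theta) - \nabla_{\theta'} R(g', \theta'),\, g' - g \big\rangle \;\le\; \big\| \nabla_{\theta'} R(g', \theta) - \nabla_{\theta'} R(g', \theta') \big\|_2 \, \|g - g'\|_2.$$
Since $\grad\loss(\cdot;g')$ is $\beta$-Lipschitz in $z$ (Definition~\ref{ass:smoothness}), pairing it against a unit vector yields a $1$-Lipschitz test function, so Kantorovich--Rubinstein duality converts the gradient gap into a Wasserstein distance,
$$\big\| \nabla_{\theta'} R(g', \theta) - \nabla_{\theta'} R(g', \theta') \big\|_2 \;\le\; \beta \, W_1\big( \D^*(\theta), \D^*(\theta') \big) \;\le\; \beta \, \omega \eps \, \|\theta - \theta'\|_2,$$
the final inequality being robust $\eps$-sensitivity (Definition~\ref{ass:sensitivity}). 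Cancelling one power of $\|g-g'\|_2$ delivers $\|G(\theta) - G(\theta')\|_2 \le \omega\eps\frac{\beta}{\gamma}\|\theta-\theta'\|_2$.

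Statement~2 is then a direct application of the Banach fixed-point theorem. When $\omega\eps < \gamma/\beta$ the contraction factor $q := \omega\eps\frac{\beta}{\gamma}$ satisfies $q<1$, so $G$ admits a unique fixed point $\theta_{PS}$, which is robustly performatively stable by definition, and the iterates obey $\|\theta_t - \theta_{PS}\|_2 \le q^t \|\theta_0 - \theta_{PS}\|_2$. To recover the stated iteration count I would solve $q^t\|\theta_0 - \theta_{PS}\|_2 \le \delta$, obtaining $t \ge \log(\|\theta_0 - \theta_{PS}\|_2/\delta)/(-\log q)$, and then apply the elementary inequality $-\log q = -\log(1-(1-q)) \ge 1-q$, which upgrades this to the claimed $t \ge \left(1 - \omega\eps\frac{\beta}{\gamma}\right)^{-1}\log(\|\theta_0 - \theta_{PS}\|_2/\delta)$.

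The main obstacle is the reduction itself, which is genuinely more delicate than in the non-robust case. I would need to establish: (i) that the supremum defining $R$ is attained so that $Q^\star$ and $\D^*$ are well defined, which calls for compactness or weak-continuity of the $f$-divergence ball $\U(\D(\theta))$; (ii) that $\nabla_{\theta'}R$ may be computed by differentiating under the supremum, so that the Lipschitz-gradient hypothesis of Definition~\ref{ass:smoothness} applies to the objective whose minimizer is $G$; and, most importantly, (iii) that the maximizer actually appearing above is $Q^\star(g',\theta)$, evaluated against the loss $\loss(\cdot;g')$, whereas the $\D^*$ of Definition~\ref{ass:sensitivity} is the diagonal maximizer against $\loss(\cdot;\theta)$. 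Transferring the robust $\eps$-sensitivity bound from the diagonal maximizer to the off-diagonal one that genuinely controls the gradient gap is where I expect the real work to lie, and it is the step that must be argued carefully rather than inherited verbatim from \cite{perdomo2020performative}.
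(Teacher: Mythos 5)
Your proposal is essentially the paper's own proof. The paper's argument is exactly the contraction scheme you describe: strong convexity plus first-order optimality (its Lemma \ref{lemma:optimality}) yields $-\gamma\|G(\theta)-G(\theta')\|_2^2 \geq (G(\theta)-G(\theta'))^T\nabla f(G(\theta'))$ for a frozen worst-case objective $f$; joint smoothness in $z$ makes the normalized inner product $(G(\theta)-G(\theta'))^T\grad\loss(z;G(\theta'))/(\beta\|G(\theta)-G(\theta')\|_2)$ a $1$-Lipschitz test function; Kantorovich--Rubinstein (its Lemma \ref{lemma:k-r}) together with Definition \ref{ass:sensitivity} produces the $\omega\eps\beta\|G(\theta)-G(\theta')\|_2\|\theta-\theta'\|_2$ bound; and part 2 is the Banach iteration plus the same logarithm inequality you invoke. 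Your single monotonicity inequality $\gamma\|g-g'\|_2^2 \leq \langle \nabla R(g',\theta)-\nabla R(g',\theta'),\, g'-g\rangle$ is just the paper's inequalities (\ref{ineq:3.1}) and (\ref{ineq:3.2}) combined with its two applications of Lemma \ref{lemma:optimality}, in different packaging.

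The one substantive point is your obstacle (iii), and there you should know the paper does not do the work you anticipate. Its proof freezes the worst-case distribution at the diagonal maximizer, defines $f(\xi) = \Ex_{Z\sim\D^*(\theta)}[\loss(Z;\xi)]$, and then simply asserts that $G(\theta)$ is the unique minimizer of $f$. That assertion is precisely the diagonal-versus-off-diagonal identification you flag: $G(\theta)$ is by definition the minimizer of $\xi \mapsto \sup_{Q\in\U(\D(\theta))}\Ex_{Z\sim Q}[\loss(Z;\xi)]$, whose maximizing $Q$ varies with $\xi$, and nothing in Definitions \ref{ass:smoothness}--\ref{ass:sensitivity} forces that minimizer to coincide with the minimizer of the frozen objective $f$; even a saddle-point argument would produce the off-diagonal distribution attaining the supremum at $G(\theta)$, not $\D^*(\theta)$. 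So the transfer step you correctly identify as ``the real work'' is left unaddressed in the published proof as well (the paper itself concedes after the proof that the status of its robust assumptions is an open question). Your proposal is therefore complete at the same level of rigor as the paper's argument, and your closing paragraph is a fair diagnosis of a gap in the paper rather than a defect peculiar to your own route.
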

The importance of this theorem is that it suggests that RDRO should exhibit similar convergence behaviour to RRM. As discussed earlier, however, training models with ERM is liable to result in models which are discriminatory towards minority subgroups, while models trained with DRO should exhibit uniform performance across different subgroups. What this implies is that RRM may converge to unfair fixed points, while RDRO should converge to fair fixed points. 

\section{Experiments}
We run experiments to empirically verify the convergence of RDRO as compared to RRM, as well as to demonstrate the utility of DRO in learning fair models in the presence of data characterized by majority and minority subgroups. Implementation details can be found in \ref{sec:implementation}.

For our first experiment we reproduce an experiment from \cite{perdomo2020performative} using a credit extension dataset from the Kaggle competition \textit{Give Me Some Credit}. The dataset contains information about potential borrowers and the task is to predict whether or not an individual will default on their debt. We construct a distribution map by strategically manipulating individuals features in response to the deployed classifier \cite{hardt2016strategic}. Details are given in \ref{sec:experiments}.

We train two logistic regression models using RRM and RDRO. The magnitude of the distribution shift is controlled by a parameter, $\eps$. The larger the value of $\eps$, the larger the distribution shift. The $\eps$-sensitivity condition of the distribution map is thus only satisfied for small values of $\eps$.

Figures \ref{fig:erm-theta-gap} and \ref{fig:dro-theta-gap} contain the distance between the learned parameters of the model on successive iterates. A distance of zero means that the model has converged to a performatively stable point. These figures show us RRM and RDRO exhibit similar convergence behaviour. Both RRM and RDRO converge for $\eps = 0.01$, and RDRO also converges for $\eps = 1.0$. Both RRM and RDRO converge to small neighbourhoods for $\eps = 10$, while neither converge for $\eps = 100$. This result empirically supports our theoretical work, suggesting that RRM and RDRO exhibit similar convergence behaviour.
\begin{figure}[t]
    \centering
    \includegraphics[width=0.4\textwidth]{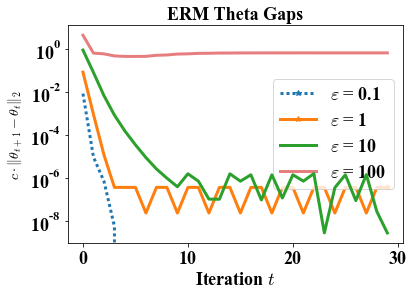}
    \vspace{-1em}
    \caption{Plot of the normalized distance between successive values of $\theta$ for ERM.}
    \vspace{-1em}
    \label{fig:erm-theta-gap}
\end{figure}
\begin{figure}
    \centering
    \includegraphics[width=0.4\textwidth]{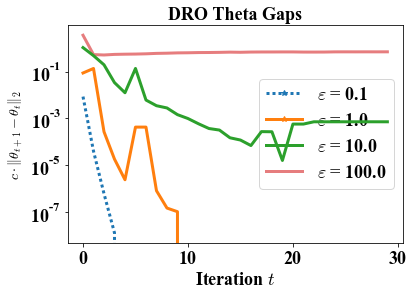}
    \vspace{-1em}
     \caption{Plot of the normalized distance between successive values of $\theta$ for DRO.}
     \vspace{-1em}
    \label{fig:dro-theta-gap}
\end{figure}

The credit dataset does not contain any demographic information, so to investigate the fairness properties of the fixed points resulting from RRM and RDRO we generate synthetic data, sampled from two multivariate normal distributions with different means. 80\% of the data is sampled from one distribution and 20\% from the other. We refer to the majority sub-probability distribution as group A and the minority distribution as group B. Data from each individual distribution is linearly separable, but the combined distribution is not, meaning that a linear classifier must trade-off performance between the two subgroups. 

We again implement a strategic manipulation distribution map and train models using RRM and RDRO for 30 iterations. The accuracy of the models for the values of $\eps$ for which we observe convergence are given in Tables \ref{table:rrm-acc-by-group} and \ref{table:rdro-acc-by-group}. We can see that RRM converges to a performatively stable model that achieves high accuracy on the majority group, but performs poorly on the minority group. RDRO on the other hand, results in models that perform similarly across both subgroups in the data. Further details are provided in \ref{sec:experiments}.
\begin{table}[ht!]
\begin{center}
\begin{tabular}{|c|c c c |} 
\multicolumn{4}{c}{ERM Performative Accuracy} \\
\hline
    Group &  $\eps = 0.01$ &  $\eps = 0.25$ &  $\eps = 0.5$\\  
\hline\hline
    A & 0.893 & 0.896 & 0.898 \\ 
\hline
    B & 0.540 & 0.540 & 0.540  \\
\hline
    All Data & 0.834 & 0.837 & 0.838 \\
\hline
\end{tabular}
\end{center}
\caption{Accuracy by Group for ERM after 30 iterations.}
\vspace{-1em}
\label{table:rrm-acc-by-group}
\end{table}
\begin{table}[ht!]
\begin{center}
\begin{tabular}{|c|c c c |} 
\multicolumn{4}{c}{DRO Performative Accuracy} \\
\hline
   Group &   $\eps = 0.01$ &  $\eps = 0.25$ &  $\eps = 0.5$\\   
\hline\hline
    A & 0.687 & 0.710  & 0.738  \\ 
\hline
    B & 0.670 & 0.660  & 0.660  \\
\hline
    All Data & 0.684 & 0.701  & 0.725 \\
\hline
\end{tabular}
\end{center}
\caption{Accuracy by Group for DRO after 30 iterations.}
\vspace{-1em}
\label{table:rdro-acc-by-group}
\end{table}
\vspace{-2mm}
\section{Conclusion}
Performative prediction offers a rigorous framework in which to reason about decision making in dynamic environments. Existing theory for performative prediction had only been developed for models trained with ERM objectives, however, which is liable to result in models that discriminate against minority groups.

We extend the performative prediction framework to DRO and prove a convergence result for RDRO. We also investigate RRM and RDRO empirically and find that while they exhibit similar convergence behaviour, RRM converges to fixed points that perform well on a majority subgroup, but poorly on a minority subgroup, whereas RDRO converges to models which succsefully balance performance across majority and minority subgroups.

\bibliography{liam}
\bibliographystyle{icml2022}

\newpage
\appendix
\onecolumn
\section{Appendix}

\subsection{Related Work}
\label{sec:related-work}
The two most influential areas of research for this work are performative prediction and distributionally robust optimization, and the papers most important for this work are \cite{perdomo2020performative} and  \cite{duchi2021learning}. Perdomo \textit{et al.} develop the performative prediction framework, while Duchi and Namkoong give a thorough summary of distributionally robust optimization and explain its potential application to fairness concerns in machine learning. 
    
Following the publication of \textit{Performative Prediction} \cite{perdomo2020performative}, there have been a number of papers which have further explored performative prediction and extended some of the results from the original paper. For instance, \cite{mendler2020stochastic} prove results for stochastic optimization in performative prediction, \cite{miller2021outside} prove new results relating performatively stable points to performatively optimal points, \cite{brown2020performative} attempt to move toward sequential games for performative prediction by adding a notion of state to the performative prediction problem, and \cite{dong2021approximate} approach performative prediction from a dynamical systems perspective allowing a move away from strict contraction mappings when examining convergence to performatively stable models. 

Distributionally robust optimization is an older area of research as compared to performative prediction, but it has only recently received more attention within the machine learning community, largely due to work by Hongseook Namkoong and John Duchi \cite{namkoong2017variance, duchi2020distributionally, jeong2020robust}. Areas such as finance, where robust optimization is important as rare events have the potential to be catastrophic for a portfolio, have seen research on robust optimization for decades. See \cite{ben-tal2009robust} for a survey. \cite{ben-tal2013robust} \cite{shapiro2017distributionally} provide rigorous mathematical treatments of DRO with $f$-divergence balls. 

A number of recent papers have also attempted to better understand fairness in machine learning over time, albeit not within the performative prediction framework. Closely related to performative prediction, strategic classification \cite{hardt2016strategic} models the prediction problem as an adversarial game where  individuals manipulate their features in response to deployed models. Recent work \cite{milli2019social, hu2019disparate} has examined how strategic classification interacts with fairness concerns. Strategic classification is an interesting problem setting, but is not as general as performative prediction and only captures scenarios that can rightly be modelled as adversarial games. \cite{damour2020fairness} on the other hand, approach long-term fairness questions from a purely empirical perspective and use simulation studies to understand the long-term dynamics of algorithmic choices on populations in a variety of scenarios designed to reflect real-world applications. This approach is interesting, but the lack of theoretical framework limits their ability to generalize beyond specific simulations.

Finally, the work most closely related to this paper, \cite{hashimoto2018fairness}, examines the impact of repeatedly minimizing loss and deploying a model using empirical risk minimization and distributionally robust optimization on a population that changes as a function of the loss incurred. This work, however, does not fit into the performative prediction framework and applies only to a specific scenario in which individuals arrive according to a Poisson process and depart as a function of the loss. This research aims to extend this work by combining DRO and performative prediction in order to leverage the known results in both areas to provide a more general understanding of how DRO can impact long-term fairness.

\subsection{Distributionally Robust Optimization}
\label{sec:dro}
The notation $Q \ll P_0$ means that $Q$ is absolutely continuous with respect to $P_0$\footnote{If two measures, $\mu$ and $\nu$, are on the same measure space $(\mathcal{X}, \mathcal{A})$, $\mu$ is said to be absolutely continuous with respect to $\nu$ if $\mu(A) = 0$ for every set $A$ for which $\nu(A) = 0$.}. An $f$-divergence, $D_f(Q || P)$, is a function that measures the difference between two probability distributions $Q$ and $P$, although it is not a metric. There are a variety of different $f$-divergences, but their general definition is as follows \cite{renyi1961measures, csiszar2004information, csiszar1967information}.

$D_f(Q || P) := \int f(\frac{dQ}{dP})dP$ where $f$ is a convex function such that $f(1) = 0$. If $Q$ and $P$ are absolutely continuous with respect to a reference distribution $\mu$, then their probability densities $q$ and $p$ satisfy $dP = p$ and $dQ = q$ and the $f$-divergence can be written as
$$D_f(Q || P) = \int f(\frac{q(x)}{p(x)})d\mu(x).$$

Some examples of $f$-divergences are KL-divergence ($f(t) = t\log t$), Pearson $\chi^2$-divergence ($f(t) = (t-1)^2, t^2 -1, t^2 -t$), and total variation distance ($f(t) = \frac{1}{2}|t - 1|$). An $f$-divergence ball centred at $P$ of radius $\rho$ is a set that contains all probability distributions whose $f$-divergence is within $\rho$ of $P$. For example, $\{Q : D_f(Q || P_0) \leq \rho\}$ is an $f$-divergence ball that contains all probability distributions $Q$ that are within $\rho$ $f$-divergence distance from $P_0$. This is analogous to the notion of an $\epsilon$-ball surrounding a vector in a vector space. 

The distributionally robust problem therefore, is to find a set of parameters, $\theta \in \Theta$, that minimize the worst case expected loss of all probability distributions $Q$ that are within the $f$-divergence ball of radius $\rho$ of our data generating distribution. This differs from ERM in that we do not seek to minimize the expected loss over our data generating distribution, but instead seek to minimize the worst case expected loss over a set of probability distributions nearby our data generating distribution. 

As the name suggests, learning parameters that minimize the distributionally robust risk gives us a model that is robust to changes in the data generating distribution. Traditional risk minimization will return a model that minimizes risk for the data generating distribution, but offers no performance guarantees when that distribution changes, and can result in models that are brittle and do not perform well on out of distribution (OOD) examples. DRO, on the other hand, is a more conservative procedure that minimizes worst-case risk and should thus be robust to changes to the probability generating distribution that lie within the $f$-divergence ball of radius $\rho$. 

As with traditional risk minimization, we do not have access to the theoretical data generating distribution. Instead, we solve the distributionally robust problem via the plug-in estimator 
$$\hat{\theta}_n \in \argmin_{\theta \in \Theta} \left\{ R_f(\theta, \hat{P}_n) := \sup_{Q \ll \hat{P}_n} \{\Ex_Q[\loss(\theta; X)] : D_f(Q || \hat{P}_n) \leq \rho\} \right\}$$
where $\hat{P}_n$ is the empirical measure on $X_i \sim^{iid} P_0$. Duchi and Namkoong prove convergence guarantees and rates for the plug-in estimator along with some asymptotic results such as showing consistency of the estimator \cite{duchi2021learning}.

It is not necessarily clear from the discussion so far what DRO has to do with fairness concerns in machine learning, but DRO in fact has characteristics that are very desirable for addressing issues of fairness and discrimination in learning algorithms. We will explain this shortly, but first we introduce dual reformulations of the distributionally robust problem as it provides some intuition as to the connection with fairness concerns.

The following theorem comes from  \cite{shapiro2017distributionally}. Let $f^*(s) := \sup_t \{st - f(t)\}$ be the Fenchel conjugate.
\begin{theorem} \label{theorem:shapiro:dual}
([\cite{shapiro2017distributionally}, Section 3.2]). 
Let $P$ be a probability measure on $(\mathcal{X}, \mathcal{A})$ and $\rho > 0$. Then
$$ R_f(\theta; P) = \inf_{\lambda \geq 0, \eta \in \reals} \left\{ \Ex_P \left[ \lambda f^* \left( \frac{\loss(\theta; X) - \eta}{\lambda} \right) \right] + \lambda \rho + \eta \right\} $$
for all $\theta$. Moreover, if the supremum on the left hand side is finite, there are finite $\lambda(\theta) \geq 0$ and $\eta(\theta) \in \reals$ attaining the infimum on the right hand side.
\end{theorem}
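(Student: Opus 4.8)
The plan is to recognize this as a Lagrangian duality result and to derive the right-hand side as the dual of the constrained maximization defining $R_f(\theta; P)$. First I would reparametrize the inner supremum in terms of the likelihood ratio $L := dQ/dP$: since every $Q \ll P$ is determined by a nonnegative density $L$ with $\Ex_P[L] = 1$, the primal problem becomes
$$R_f(\theta; P) = \sup_{L \geq 0} \left\{ \Ex_P[L \cdot \loss(\theta; X)] : \Ex_P[f(L)] \leq \rho,\ \Ex_P[L] = 1 \right\}.$$
This is the maximization of a linear functional of $L$ over a convex feasible set, namely the intersection of the $f$-divergence ball with the affine normalization constraint, which sets the problem up for convex duality.

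Next I would form the Lagrangian by attaching a multiplier $\lambda \geq 0$ to the divergence constraint $\Ex_P[f(L)] - \rho \leq 0$ and a free multiplier $\eta \in \reals$ to the normalization constraint $\Ex_P[L] - 1 = 0$, giving the dual function
$$g(\lambda, \eta) = \lambda \rho + \eta + \sup_{L \geq 0} \Ex_P\left[ L\,(\loss(\theta; X) - \eta) - \lambda f(L) \right].$$
The crucial step is to pull the supremum inside the expectation so that it becomes a pointwise maximization of $\ell \mapsto \ell(\loss(\theta;x) - \eta) - \lambda f(\ell)$ over $\ell \geq 0$ for each $x$. Rescaling by $\lambda > 0$ and matching against the definition of the Fenchel conjugate $f^*(s) = \sup_t\{st - f(t)\}$, with $f$ set to $+\infty$ on the negative axis so that the constraint $\ell \geq 0$ is automatic, identifies this pointwise maximum as $\lambda f^*((\loss(\theta;x) - \eta)/\lambda)$. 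Taking the infimum of $g$ over $\lambda \geq 0,\ \eta \in \reals$ then reproduces the right-hand side exactly.

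The hard part will be justifying the two analytic steps that upgrade this formal calculation to an identity. The first is the interchange of supremum and integral; here I would invoke the standard interchange theorem for integral functionals of Rockafellar--Wets, which permits pointwise optimization provided $L$ ranges over a decomposable space such as $L^1(P)$ and the integrand is a normal convex integrand. The second is establishing a zero duality gap, for which I would verify Slater's condition: because $\rho > 0$, the choice $Q = P$ (i.e.\ $L \equiv 1$) satisfies $D_f(P \| P) = f(1) = 0 < \rho$ strictly, so the feasible set has nonempty relative interior and strong duality holds. Finiteness of the primal supremum then forces the dual infimum to be attained at finite $\lambda(\theta) \geq 0$ and $\eta(\theta) \in \reals$, which I would obtain from coercivity of $g$ in $(\lambda, \eta)$ together with lower semicontinuity of the conjugate; the boundary case $\lambda = 0$ must be handled separately, interpreting $\lambda f^*(\,\cdot\,/\lambda)$ through its recession function as $\lambda \downarrow 0$.
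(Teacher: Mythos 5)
The paper never proves this theorem: it is imported verbatim from the cited reference (Shapiro 2017, Section 3.2), so there is no in-paper argument to compare against, and your proposal must be judged against the standard derivation in that source. Your plan is essentially that derivation, and it is sound: rewrite the supremum over the density $L = dQ/dP$, attach a multiplier $\lambda \geq 0$ to the divergence constraint and a free multiplier $\eta$ to the normalization constraint, collapse the inner supremum to a pointwise Fenchel conjugate via an interchange theorem for integral functionals over a decomposable space (with $f \equiv +\infty$ on the negative axis so that $L \geq 0$ is absorbed into the integrand), and close the duality gap with a Slater argument using the strictly feasible point $L \equiv 1$, which works precisely because $f(1) = 0 < \rho$. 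Two minor refinements. First, what Slater's condition requires here is a point satisfying the affine equality constraint at which the single scalar inequality constraint holds strictly; your phrase ``nonempty relative interior of the feasible set'' is not quite that statement, but your verification with $L \equiv 1$ is exactly the right one. Second, attainment of the dual infimum at finite $(\lambda(\theta), \eta(\theta))$ when the primal value is finite follows directly from the Lagrange multiplier theorem for convex programs with finitely many scalar constraints under Slater's condition, so the separate coercivity-plus-lower-semicontinuity argument is unnecessary; that said, your handling of the boundary case $\lambda = 0$ through the recession function (the closure of the perspective $\lambda f^*(\cdot/\lambda)$) is the correct reading of the dual integrand there and is needed for the formula to make sense on all of $\{\lambda \geq 0\}$.
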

Additionally, \cite{duchi2021learning} provide a simplified version of this dual formulation for the Cressie-Read family of $f$-divergences, obtained by minimizing out $\lambda > 0$ from Theorem \ref{theorem:shapiro:dual}. 
\begin{theorem} \label{theorem:duchi-dual}
([\cite{duchi2021learning}, Section 2]).
For any probability $P$ on $(\mathcal{X}, \mathcal{A})$, $k \in (1, \infty)$, $k_* = k/(k - 1)$, any $\rho > 0$, and $c_k(\rho) = (1 + k(k-1)\rho)^{1/k}$, we have for all $\theta \in \Theta$
$$ R_f(\theta; P) = \inf_{\eta \in \reals} \left\{ c_k(\rho) \Ex_P \left[ (\loss(\theta; X) - \eta)_+^{k_*}  \right]^{1/k_*} + \eta \right\} $$
\end{theorem}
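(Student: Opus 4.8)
I would argue directly from the primal problem rather than from the dual of Theorem~\ref{theorem:shapiro:dual}, because for the Cressie--Read family the divergence ball collapses to a single moment constraint. Writing $L = dQ/dP$ for the likelihood ratio of a feasible $Q \ll P$, the generator $f_k(t) = \frac{t^k - k t + k - 1}{k(k-1)}$ together with $\Ex_P[L] = 1$ turns the constraint $D_{f_k}(Q \| P) \le \rho$ into
$$\Ex_P[L^k] \;\le\; 1 + k(k-1)\rho \;=\; c_k(\rho)^k,$$
i.e. $\lVert L \rVert_{L^k(P)} \le c_k(\rho)$. Hence the robust risk is the value of the moment-constrained linear program
$$R_f(\theta; P) = \sup\Big\{ \Ex_P[L\,\loss(\theta;X)] : L \ge 0,\ \Ex_P[L] = 1,\ \lVert L \rVert_{L^k(P)} \le c_k(\rho) \Big\}.$$

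For the upper bound I would introduce the free scalar $\eta$ exactly as it appears on the right-hand side. Using $\Ex_P[L] = 1$ and $L \ge 0$,
$$\Ex_P[L\,\loss] = \Ex_P[L(\loss - \eta)] + \eta \le \Ex_P[L(\loss-\eta)_+] + \eta,$$
and H\"older's inequality with the conjugate pair $(k, k_*)$, $k_* = k/(k-1)$, gives $\Ex_P[L(\loss-\eta)_+] \le \lVert L\rVert_{L^k(P)}\,\Ex_P[(\loss-\eta)_+^{k_*}]^{1/k_*} \le c_k(\rho)\,\Ex_P[(\loss-\eta)_+^{k_*}]^{1/k_*}$. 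Since this holds for every feasible $L$ and every $\eta$, taking the supremum over $L$ and the infimum over $\eta$ yields the inequality $R_f(\theta;P) \le \inf_{\eta}\{ c_k(\rho)\,\Ex_P[(\loss-\eta)_+^{k_*}]^{1/k_*} + \eta\}$, which is one direction of the claim.

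The harder direction is tightness, and this is where I expect the real work to be. The idea is to exhibit a feasible $L^\star$ that saturates both inequalities above. Equality in H\"older forces $(L^\star)^k \propto (\loss - \eta)_+^{k_*}$, i.e. $L^\star \propto (\loss - \eta)_+^{k_* - 1}$ (using $(k_*-1)k = k_*$), which also makes the $(\cdot)_+$ step tight because such an $L^\star$ vanishes on $\{\loss < \eta\}$. The obstacle is to choose $\eta = \eta^\star$ so that the normalization $\Ex_P[L^\star] = 1$ holds and the budget $\lVert L^\star\rVert_{L^k(P)} = c_k(\rho)$ is simultaneously active; one shows $\eta^\star$ is precisely the minimizer of the outer problem, and one must separately handle the degenerate regimes (loss essentially constant, or the unconstrained tilt already feasible) and the measure-theoretic point that the supremum over $L$ is attained. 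Carrying this through gives the matching lower bound and hence equality.

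An alternative, matching the route suggested in the text, is to compute the Fenchel conjugate of the Cressie--Read generator in closed form, $f_k^*(s) = \frac{1}{k}\big[(1 + (k-1)s)_+^{k_*} - 1\big]$, substitute it into Theorem~\ref{theorem:shapiro:dual}, and perform the inner minimization over $\lambda \ge 0$. Here the main difficulty is the one-dimensional optimization: the positive-part kink in $(\lambda + (k-1)(\loss-\eta))_+$ must be tracked carefully, and it is only after this minimization, together with the appropriate treatment of $\eta$, that the budget term $\lambda\rho$ and the powers of $\lambda$ recombine into the single factor $c_k(\rho)$. I would expect the primal H\"older argument to be the cleaner of the two.
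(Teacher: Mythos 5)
Your proposal is correct in substance, but note first that the paper does not actually prove Theorem~\ref{theorem:duchi-dual}: it is quoted from \cite{duchi2021learning}, accompanied only by the remark that it is obtained by substituting the Cressie--Read conjugate into Theorem~\ref{theorem:shapiro:dual} and minimizing out $\lambda > 0$. Your primal route is therefore genuinely different from the (cited) route the paper gestures at, and its ingredients check out: with $L = dQ/dP$ one has $\Ex_P[f_k(L)] = (\Ex_P[L^k]-1)/(k(k-1))$, so the divergence ball is exactly the moment constraint $\Ex_P[L^k] \le c_k(\rho)^k$; the H\"older step gives the upper bound cleanly; and your tightness construction is the right one --- indeed, requiring $L^\star = c\,(\loss(\theta;X)-\eta)_+^{k_*-1}$ to satisfy both $\Ex_P[L^\star]=1$ and $\Ex_P[(L^\star)^k]^{1/k} = c_k(\rho)$ is precisely the first-order stationarity condition of the outer minimization in $\eta$, and at such an $\eta^\star$ the value $\Ex_P[L^\star\loss]$ reproduces the dual objective, so the two bounds meet. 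What remains unfinished is exactly what you flag: the regimes with no stationary $\eta^\star$ (loss essentially constant, or $\rho$ so large that the supremum equals the essential supremum of the loss and the infimum sits at a kink with the budget slack), integrability caveats (one needs $(\loss(\theta;X)-\eta)_+ \in L^{k_*}(P)$ for some $\eta$, else both sides are infinite), and possible non-attainment of the primal supremum, handled by approximating sequences. The trade-off between the two routes is as you suspect: yours is elementary and self-contained, needing only H\"older plus case analysis done by hand, while the route via Theorem~\ref{theorem:shapiro:dual} inherits strong duality and attainment wholesale and leaves only a one-dimensional minimization over $\lambda$, at the cost of resting on a much heavier general theorem. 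Either way, a completed write-up of your argument would supply a proof that the paper itself omits.
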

The above formulations are jointly convex in $(\theta, \lambda, \eta)$ and $(\theta, \eta)$, respectively, for convex losses, $\loss(\theta; X)$, making them amenable to techniques from convex optimization such as interior point methods \cite{boyd2004convex}.

The Cressie-Read family of $f$-divergences are parameterized by $k \in (-\infty, \infty) \setminus \{0,1\}$, $k_* = \frac{k}{k-1}$, with

\begin{equation*}
    f_k(t) := \frac{t^k - kt + k - 1}{k(k - 1)} \quad \text{and} \quad f_k^*(s) := \frac{1}{k} \left[ ((k-1)s + 1)_+^{k_*} - 1 \right].
\end{equation*}

Issues of fairness arise in machine learning when an algorithm treats different demographic groups in a disparate fashion. As outlined above, this often means that algorithmic performance varies across different demographic groups. If this occurs, it is almost certainly the case that there exist distinct probability distributions over the demographic groups because if all demographic groups were characterized the the same probability distribution, classification algorithms should exhibit uniform performance across demographic groups.

This realization makes it easy to see why ERM is likely to discriminate, particularly against minority groups. ERM treats the loss on each data point equally, thus, if there is a majority and minority probability distribution and a learning algorithm must balance performance on these groups, an ERM objective is likely to result in a model that performs better on the majority group and worse on the minority group. It is exactly this type of problem that DRO is designed to resolve. 

Unlike ERM, DRO does not equally weight each data point, but instead up-weights data points on which the model is achieving high loss. This means that the model should achieve somewhat uniform performance on individuals across demographic groups. This can be seen by looking at the dual formulation in Theorem \ref{theorem:duchi-dual}. The DRO objective only considers losses above the optimal dual variable $\eta^*(\theta)$ and these losses are up-weighted by the $L^{k_*}(P)$-norm. Losses that are less than the optimal dual variable are set to zero in the objective. Another way of saying this is that the DRO objective is equivalent to optimizing the tail-performance of a model \cite{duchi2021learning}. 

If DRO performs poorly on a subset of the data that is correlated with a distinct demographic group, these losses will be up-weighted by the dual formulation of the distributionally robust objective, pushing the model to improve performance on this subset. DRO does not offer any guarantees of uniform performance across demographic groups, but as we increase the value of $\rho$ we will increase the loss incurred on ``hard" regions of the data where the model performs poorly.

\subsection{Performative Prediction} 
\label{sec:per-pred}
In supervised learning we assume that our data is sampled i.i.d. from an unknown data generating distribution and that our model is then deployed to make predictions on data that follows this same distribution. In many scenarios, however, the very act of making predictions influences the data on which we wish to make these predictions. That is to say, our models are \textit{performative} and instead of passively describing the world and making predictions about it, they actually induce change in the world. 

A simple example of this is predictive policing. In predictive policing we train a model to predict where crimes are likely to occur based on historical data and then deploy more resources to areas where the model predicts crimes are more likely to occur. The increased police patrols and surveillance results in more crimes being detected which might further increase the perceived crime rate in those areas. If this data is then used for future predictions it will result in a shifted distribution of the data as a result of the predictions of the previous model. Another example of performativity is an algorithm that weights different elements of a student's CV such as SAT scores and GPA in order to make college admissions decisions. If the algorithm heavily weights SAT scores, over time it will become apparent to applicants that SAT scores are very important and they will dedicate more resources to improving SAT scores, thus changing the distribution of the data on which the algorithm is making predictions as a result of those very predictions.  

Performative prediction is closely related to many other fields in machine learning including bandits, reinforcement learning, strategic classification, causal inference, convex optimization, and game theory, but the precise notion and formalism for performative prediction was only developed very recently in \cite{perdomo2020performative}. We formally specify the performative prediction problem now and contrast it with the supervised learning problem. 

Assume we have a measure space $(\mathcal{Z}, \mathcal{A})$ with $Z$ a random element of $\mathcal{Z}$ and $\D$ the data generating distribution on this space. Let $\Theta \subset \reals^d$ be the parameter (model) space and $\loss: \Theta \times \mathcal{Z} \rightarrow \reals$ be a loss function. The supervised learning problem is to minimize the the objective over this distribution. In the case of risk minimization the objective is the expected loss, $\loss(Z; \theta)$ with respect to $\D$.
\begin{equation*}
    R(\theta) = \Ex_{Z \sim \D} [\loss(Z; \theta)].
\end{equation*}
In contrast to this, performative prediction involves making predictions on a distribution that has been shifted as a result of deploying the model, $\D(\theta)$. We refer to $\D(\theta)$ as the \textit{distribution map}. The concept that captures this notion of risk is known as \textit{performative risk} and is formalized as follows:
\begin{equation*}
    PR(\theta) = \Ex_{Z \sim \D(\theta)} [\loss(Z; \theta)].
\end{equation*}
The difference between this and the supervised learning problem is that expected loss is now taken with respect to the induced distribution rather than the data generating distribution.

The notion of what constitutes a good model is different in supervised learning and performative prediction. In supervised learning the task is simpler - minimize the risk on the data generating distribution. In performative prediction however, we now need to consider how to minimize risk on a distribution that is different from that which generated our training data, and is in fact a function of whatever model we deploy. To capture these notions, \cite{perdomo2020performative} define \textit{performative optimality} and \textit{performative stability}.  
\begin{definition}
(performative optimality) A model $f_{\theta_{PO}}$ is performatively optimal if the following relationship holds:
$$\theta_{PO} = \argmin_{\theta \in \Theta} \Ex_{Z \sim \D(\theta)} [\loss(Z; \theta)].$$
Equivalently, $\theta_{PO} = \argmin_{\theta \in \Theta} PR(\theta)$ where $PR(\theta)$ is the performative risk as defined above.  
\end{definition}
A performatively optimal point is a minimizer of the performative risk. An alternative solution concept is referred to as \textit{performative stability}. 
\begin{definition}
(performative stability) A model $f_{\theta_{PS}}$ is performatively stable if the following relationship holds:
$$\theta_{PS} = \argmin_{\theta \in \Theta} \Ex_{Z \sim \D(\theta_{PS})} [\loss(Z; \theta)].$$
Define $DPR(\theta, \theta') := \Ex_{Z \sim \D(\theta)} [\loss(Z; \theta')]$ as the decoupled performative risk; then $\theta_{PS} = \argmin_{\theta \in \Theta} DPR(\theta_{PS}, \theta)$. 
\end{definition}

A performatively stable model is not necessarily a minimizer of the performative risk, but it is optimal on the distribution it induces. Hence, if you have performative stability there is no need to retrain a model to cope with the induced distribution drift. Performative optimality and performative stability are distinct concepts and a performatively optimal point is not necessarily performatively stable, and vice versa. As explained in \cite{perdomo2020performative}, performatively stable models are fixed points of risk minimization.  
In game theoretic terms, we can consider performative prediction as a game in which one player deploys a model, $\theta$, and the environment responds with some distribution map, $\D(\theta)$. If $\D(\theta)$ is a best response, then a performatively optimal point corresponds to a Stackelberg equilibrium, whereas a performatively stable point corresponds to a Nash equilibrium. From game theory we know that except in special cases (\textit{e.g.} finite zero-sum games), Nash equilibria and Stackelberg equilibria do not necessarily coincide \cite{kroer2009games}. 

Performative prediction presents a special case of learning under distribution drift, where the distribution drift is a function of the model deployed. A common approach in supervised learning under distribution drift is to retrain a model on newly collected data. While this does not directly minimize performative risk, it is a potentially reasonable solution in a variety of scenarios. \cite{perdomo2020performative} prove theorems under which repeated risk minimization and repeated gradient descent converge to performatively stable models. We present these findings in detail here as they are relevant for our work. 

\begin{definition}
\label{def:eps-sensitivity}
($\eps$-sensitivity)
We say that a distribution map $\D(\cdot)$ is $\eps$-sensitive if for all $\theta, \theta' \in \Theta$:
$$W_1 \left(\D(\theta), \D(\theta') \right) \leq \eps ||\theta - \theta' ||_2,$$
where $W_1$ denotes the Wasserstein-1 distance.
\end{definition}
We also make assumptions on the loss function $\loss(z; \theta)$. Let $\mathcal{Z} := \cup_{\theta \in \Theta} supp (\D(\theta))$.
\begin{definition}
\label{def:joint-smoothness}
(joint smoothness) We say that a loss function $\loss(z; \theta)$ is $\beta$-jointly smooth if the gradient $\grad$ is $\beta$-Lipschitz in $\theta$ and $z$, that is
$$|| \grad \loss(z; \theta) - \grad  \loss(z; \theta') ||_2 \leq \beta || \theta - \theta'||_2, \quad || \grad \loss(z; \theta) - \grad  \loss(z'; \theta) ||_2 \leq \beta || z - z'||_2, $$
for all $\theta, \theta' \in \Theta$ and $z, z' \in \mathcal{Z}$ 
\end{definition}
\begin{definition}
\label{def:strong-convexity}
(strong convexity) We say that a loss function $\loss(z; \theta)$ is $\gamma$-strongly convex if
$$ \loss(z; \theta) \geq \loss(z; \theta') + \grad \loss(z; \theta')^T(\theta - \theta') + \frac{\gamma}{2} || \theta - \theta'||_2^2, $$
for all $\theta, \theta' \in \Theta$ and $z \in \mathcal{Z}$. If $\gamma = 0$ this condition is equivalent to convexity. 
\end{definition}
\begin{definition}
(repeated risk minimization)
Repeated risk minimization (RRM) refers to the procedure where, starting from an initial model $f_{\theta_0}$, we perform the following sequence of updates for every $t \geq 0$:
$$\theta_{t+1} = G(\theta_t) := \argmin_{\theta \in \Theta} \Ex_{Z \sim \D(\theta_t)} [\loss(Z; \theta)].$$
\end{definition}
Definitions \ref{def:eps-sensitivity}, \ref{def:joint-smoothness}, and \ref{def:strong-convexity} are assumptions on the loss and distribution map that are required for Theorem 3.5 in \cite{perdomo2020performative}. We state that theorem now.
\begin{theorem} 
\label{theorem:rrm-convergence} ([\cite{perdomo2020performative}, Theorem 3.5])
Suppose that the loss $\loss(z; \theta)$ is $\beta$-jointly smooth and  $\gamma$-strongly convex. If the distribution map $\D(\cdot)$ is $\eps$-sensitive, then the following statements are true:
\begin{enumerate}
    \item $||G(\theta) - G(\theta')||_2 \leq \eps \frac{\beta}{\gamma} ||\theta - \theta'||_2$, for all $\theta, \theta' \in \Theta$ 
    \item If $\eps < \frac{\gamma}{\beta}$, the iterates $\theta_t$ of RRM converge to a uniquely performatively stable point $\theta_{PS}$ at a linear rate: $||\theta_t - \theta_{PS}||_2 \leq \delta$ for $t \geq \left( 1 - \eps \frac{\beta}{\gamma} \right)^{-1} \log \left(\frac{||\theta_{0} - \theta_{PS} ||_2}{\delta} \right)$
\end{enumerate}
\end{theorem}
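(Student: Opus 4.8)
The plan is to mirror the proof of Theorem~\ref{theorem:rrm-convergence} from \cite{perdomo2020performative}, replacing the ordinary risk by the robust risk and ordinary $\eps$-sensitivity by the robust sensitivity of Definition~\ref{ass:sensitivity}. Write $R_f(\theta';P) := \sup_{Q \ll P}\{\Ex_{Z\sim Q}[\loss(Z;\theta')] : Q\in\U(P)\}$ for the robust risk centred at a distribution $P$, so that the RDRO map is $G(\theta) = \argmin_{\theta'\in\Theta} R_f(\theta';\D(\theta))$ --- note that the inner supremum is retained, so this is genuine distributionally robust optimization and not repeated risk minimization against a fixed worst-case distribution. Definition~\ref{ass:convexity} makes $\theta'\mapsto R_f(\theta';\D(\theta))$ $\gamma$-strongly convex, so $G(\theta)$ is the unique minimizer and satisfies the first-order condition $\nabla R_f(G(\theta);\D(\theta)) = 0$ (on $\mathrm{int}\,\Theta$; for general convex $\Theta$ one replaces this by the variational inequality and the argument is unchanged). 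Everything reduces to Part~1: once the contraction bound holds with modulus $\omega\eps\beta/\gamma < 1$, $G$ is a contraction on the complete space $\Theta$ and Part~2 follows from the Banach fixed-point theorem.

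To prove Part~1, fix $\theta_1,\theta_2$ and set $\theta_i^+ = G(\theta_i)$. By $\gamma$-strong convexity of $R_f(\cdot;\D(\theta_2))$ and $\nabla R_f(\theta_2^+;\D(\theta_2)) = 0$,
$$\gamma\|\theta_1^+-\theta_2^+\|_2^2 \le \langle \nabla R_f(\theta_1^+;\D(\theta_2)) - \nabla R_f(\theta_2^+;\D(\theta_2)),\ \theta_1^+-\theta_2^+\rangle = \langle \nabla R_f(\theta_1^+;\D(\theta_2)),\ \theta_1^+-\theta_2^+\rangle.$$
Subtracting the zero term $\nabla R_f(\theta_1^+;\D(\theta_1)) = 0$ inside the inner product and applying Cauchy--Schwarz gives
$$\gamma\|\theta_1^+-\theta_2^+\|_2 \le \|\nabla R_f(\theta_1^+;\D(\theta_2)) - \nabla R_f(\theta_1^+;\D(\theta_1))\|_2,$$
so it remains to bound the difference of robust gradients at the common parameter $\theta_1^+$ for two different centre distributions by $\beta\omega\eps\|\theta_1-\theta_2\|_2$.

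This is the core lemma. By Danskin's theorem, whenever the inner maximizer is attained and unique, $\nabla R_f(\theta';P) = \Ex_{Z\sim Q^\star(\theta';P)}[\nabla_{\theta'}\loss(Z;\theta')]$, where $Q^\star(\theta';P) = \argmax_{Q\in\U(P)}\Ex_{Z\sim Q}[\loss(Z;\theta')]$. Applying this with $\theta' = \theta_1^+$ at the two centres $\D(\theta_1)$ and $\D(\theta_2)$, coupling the maximizers optimally, and using the $\beta$-Lipschitz-in-$z$ content of Definition~\ref{ass:smoothness} yields
$$\|\nabla R_f(\theta_1^+;\D(\theta_2)) - \nabla R_f(\theta_1^+;\D(\theta_1))\|_2 \le \beta\, W_1\!\big(Q^\star(\theta_1^+;\D(\theta_1)),\, Q^\star(\theta_1^+;\D(\theta_2))\big).$$
Definition~\ref{ass:sensitivity} controls the movement of the worst-case distribution by $\omega\,W_1(\D(\theta_1),\D(\theta_2))$, and $\eps$-sensitivity (Definition~\ref{def:eps-sensitivity}) gives $W_1(\D(\theta_1),\D(\theta_2)) \le \eps\|\theta_1-\theta_2\|_2$; chaining these proves the lemma and hence Part~1. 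For Part~2, iterating Part~1 gives $\|\theta_t-\theta_{PS}\|_2 \le (\omega\eps\beta/\gamma)^t\|\theta_0-\theta_{PS}\|_2$, and imposing that the right-hand side be at most $\delta$, together with the elementary bound $\log(1/x)\ge 1-x$ for $x\in(0,1)$, produces the stated iteration count.

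The main obstacle is the core lemma, and specifically reconciling Danskin's theorem with Definition~\ref{ass:sensitivity}. The robust sensitivity hypothesis is stated for $\D^\star(\theta) = \argmax_{Q\in\U(\D(\theta))}\Ex_{Z\sim Q}[\loss(Z;\theta)]$, with the loss evaluated at the \emph{same} $\theta$ that generates the uncertainty set, whereas the contraction argument needs the worst-case distribution $Q^\star(\theta_1^+;\D(\theta_i))$, with the loss evaluated at the fixed common point $\theta_1^+$ while only the centre $\D(\theta_i)$ varies. Making these coincide requires either stating the robust-sensitivity hypothesis uniformly over the evaluation parameter, or proving a separate stability estimate showing that the $f$-divergence-ball $\argmax$ is $W_1$-continuous under perturbation of its centre --- plausibly via the dual reformulation of Theorem~\ref{theorem:duchi-dual}, where the optimal $\eta^\star$ and the induced re-weighting of $P$ vary continuously with $P$. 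Establishing this argmax stability, and separately checking the differentiability hypotheses that justify Danskin's interchange for the $f$-divergence problem, is where the genuine work lies; the surrounding algebra is identical to the non-robust proof.
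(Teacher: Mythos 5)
You have proved the wrong statement, though in a recoverable way: the theorem in question is the plain RRM contraction of \cite{perdomo2020performative} (Theorem \ref{theorem:rrm-convergence}), with modulus $\eps\frac{\beta}{\gamma}$ and no robustness, whereas your proposal establishes the robust contraction with modulus $\omega\eps\frac{\beta}{\gamma}$, i.e.\ Theorem \ref{theorem:rdro-convergence}. Your argument does specialize cleanly to the stated theorem: take the uncertainty set trivial, so $Q^\star(\theta';P)=P$, Danskin's theorem becomes vacuous, $\omega=1$, and robust sensitivity (Definition \ref{ass:sensitivity}) collapses to ordinary $\eps$-sensitivity (Definition \ref{def:eps-sensitivity}); what remains is a correct proof. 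Modulo that specialization, your route differs from the paper's (and Perdomo et al.'s) in two real but inessential ways. You use strong monotonicity of the gradient together with the first-order conditions at both minimizers, and then bound $\|\Ex_{Z\sim\D(\theta)}[\grad\loss(Z;G(\theta'))]-\Ex_{Z\sim\D(\theta')}[\grad\loss(Z;G(\theta'))]\|_2\leq\beta W_1(\D(\theta),\D(\theta'))$ via an explicit optimal coupling, exploiting $\beta$-Lipschitzness in $z$ of the vector-valued gradient. The paper instead writes two strong-convexity inequalities, and, because Kantorovich--Rubinstein duality (Lemma \ref{lemma:k-r}) applies to \emph{scalar} $1$-Lipschitz functions, it projects the gradient onto the direction $G(\theta)-G(\theta')$ and normalizes to manufacture the test function $g(z)$. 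Your coupling version is arguably tidier, since it sidesteps the normalization by $\|G(\theta)-G(\theta')\|_2\,\beta$ (which strictly requires handling the degenerate case $G(\theta)=G(\theta')$ separately); both yield $\gamma\|G(\theta)-G(\theta')\|_2\leq\eps\beta\|\theta-\theta'\|_2$, and your Part 2 is the identical fixed-point iteration with the bound $\log x\leq x-1$.

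The obstacle you flag at the end --- that Definition \ref{ass:sensitivity} controls $\D^*(\theta)=\argmax_{Q\in\U(\D(\theta))}\Ex_{Z\sim Q}[\loss(Z;\theta)]$ with the loss evaluated at the same $\theta$ generating the ball, while the contraction argument needs the worst-case distribution for the loss evaluated at the common point $G(\theta')$ --- is vacuous for the theorem actually at hand, since there is no inner supremum. But it is a sharp observation about the robust case: the paper's own proof of Theorem \ref{theorem:rdro-convergence} silently conflates the two, fixing $\D^*(\theta)$ and treating $G(\theta)$ as the minimizer of $f(\xi)=\Ex_{Z\sim\D^*(\theta)}[\loss(Z;\xi)]$, even though the minimizer of the robust objective satisfies its first-order condition under the worst-case distribution at $G(\theta)$, not at $\theta$. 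So: for the RRM statement your (specialized) proof is correct and essentially the standard one with a slightly different execution of the Wasserstein step; your closing caveat does not weaken it, and in fact identifies a genuine subtlety in the robust extension that the paper itself leaves open.
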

This theorem says that with the appropriate smoothness and convexity conditions satisfied on the loss and distribution map, repeatedly retraining a model by minimizing risk will converge to a unique performatively stable model at a linear rate. The proof proceeds by demonstrating that under the assumptions stated in Theorem \ref{theorem:rrm-convergence}, the RRM operator is a contraction mapping. \cite{perdomo2020performative} also show that without any of $\eps$-sensitivty, $\beta$-joint smoothness, or $\gamma$-strong convexity one can produce examples that do not converge to a fixed point. Hence these assumptions are necessary conditions to guarantee convergence of RRM to a performatively stable model in the general case.

\subsection{Definitions and Proof for Distributionally Robust Performative Prediction}
\label{sec:defs}

We begin by redefining performative risk and performatively optimal and stable models in terms of a robust objective. In the following $\D(\theta)$ is a distribution map and $D_f(Q || \D(\theta))$ is an $f$-divergence ball of radius $\rho$.
\begin{definition}\label{def:rob-perf-risk}
(robust performative risk) The robust performative risk of a model, $\theta$, is
$$ RPR(\theta) = \sup_{Q \ll \D(\theta)} \{\Ex_{Z \sim Q} [\loss(Z; \theta)] : D_f(Q || \D(\theta)) \leq \rho\}. $$
\end{definition}
Robust performative risk differs from performative risk in that the induced distribution, $\D(\theta)$, is now the centre of an $f$-divergence ball over which a supremum of the expected loss is taken. 
\begin{definition}
\label{def:rob-perf-optimality}
(robust performative optimality) A model $f_{\theta_{PO}}$ is robustly performatively optimal if the following relationship holds:
$$\theta_{PO} = \argmin_{\theta \in \Theta}  \sup_{Q \ll \D(\theta)} \{\Ex_{Z \sim Q} [\loss(Z; \theta)] : D_f(Q || \D(\theta)) \leq \rho\}.$$
Equivalently, $\theta_{PO} = \argmin_{\theta \in \Theta} RPR(\theta)$ where $RPR(\theta)$ is the robust performative risk as defined above.  
\end{definition}
\begin{definition}
\label{def:rob-perf-stability}
(robust performative stability) A model $f_{\theta_{PS}}$ is robustly performatively stable if the following relationship holds:
$$\theta_{PS} = \argmin_{\theta \in \Theta} \sup_{Q \ll \D(\theta_{PS})} \{\Ex_{Z \sim Q} [\loss(Z; \theta)] : D_f(Q || \D(\theta_{PS})) \leq \rho\}.$$
Define $RDPR(\theta, \theta') := \sup_{Q \ll \D(\theta)} \{\Ex_{Z \sim Q} [\loss(Z; \theta')] : D_f(Q || \D(\theta)) \leq \rho\}$ as the robust decoupled performative risk; then $\theta_{PS} = \argmin_{\theta \in \Theta} RDPR(\theta_{PS}, \theta)$. 
\end{definition}
We discussed earlier that previous work has shown that repeated risk minimization converges to a performatively stable model under certain assumptions on the loss and distribution map \cite{perdomo2020performative}. We can analogously define repeated distributionally robust optimization as follows.
\begin{definition}
\label{rdro}
(repeated distributionally robust optimization) Repeated distributionally robust optimization (RDRO) refers to the procedure where, starting from an initial model $f_{\theta_0}$, we perform the following sequence of updates for every $t \geq 0$:
$$\theta_{t+1} = G(\theta_t) := \argmin_{\theta \in \Theta} \sup_{Q \ll \D(\theta_t)} \{\Ex_{Z \sim Q} [\loss(Z; \theta)] : D_f(Q || \D(\theta_t)) \leq \rho\}. $$
\end{definition}
As with RRM, this is an iterative procedure where we optimize the distributionally robust objective at each time step $t$. While it is only a small departure from RRM conceptually, the DRO objective has fundamentally different mathematical properties than risk minimization making it unclear whether RDRO and RRM should exhibit similar behaviour.

We now re-state and prove our convergence theorem for RDRO. The proof of this theorem is a straightforward adaptation of the proof in \cite{perdomo2020performative}. We first introduce two lemmas used in \cite{perdomo2020performative} which we will make use of in our proof.
\begin{lemma}
\label{lemma:optimality}
(First-order optimality condition) Let $f$ be convex and let $\Omega$ be a closed convex set on which $f$ is differentiable, then 
$$x_* \in \argmin_{x \in \Omega} f(x) $$
if and only if
$$\nabla f(x_*)^T(y-x_*) \geq 0, \forall y \in \Omega.$$
\end{lemma}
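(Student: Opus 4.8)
The plan is to prove the two implications separately, using the gradient inequality for differentiable convex functions in one direction and a feasible-direction (restriction-to-a-segment) argument in the other.

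For the ``if'' direction, I would assume $\nabla f(x_*)^T(y - x_*) \geq 0$ for every $y \in \Omega$ and invoke the first-order characterization of convexity, namely $f(y) \geq f(x_*) + \nabla f(x_*)^T(y - x_*)$ for all $y$. Combining this inequality with the hypothesis immediately gives $f(y) \geq f(x_*)$ for every $y \in \Omega$, which is exactly the assertion that $x_* \in \argmin_{x \in \Omega} f(x)$.

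For the ``only if'' direction, I would fix an arbitrary $y \in \Omega$ and exploit convexity of $\Omega$: the segment $x_t := x_* + t(y - x_*) = (1-t)x_* + ty$ lies in $\Omega$ for every $t \in [0,1]$. Defining the scalar function $g(t) := f(x_t)$, optimality of $x_*$ forces $g(t) \geq g(0)$ for all such $t$, so the one-sided difference quotient $(g(t) - g(0))/t$ is nonnegative for $t > 0$. Letting $t \downarrow 0$ and applying the chain rule yields $g'(0) = \nabla f(x_*)^T(y - x_*) \geq 0$. Since $y \in \Omega$ was arbitrary, this establishes the claimed condition.

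The argument is essentially standard, so I do not expect a substantive obstacle; the only points requiring attention are bookkeeping of the hypotheses. Convexity of $\Omega$ is precisely what guarantees feasibility of the probing segment $x_t$, so that optimality of $x_*$ can be applied along it, and differentiability of $f$ on $\Omega$ is what legitimizes both the gradient inequality in the first part and the directional-derivative limit in the second. The one mild subtlety is ensuring the chain-rule computation of $g'(0)$ is justified as a one-sided derivative, which follows directly from differentiability of $f$ at $x_*$.
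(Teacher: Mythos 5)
Your proof is correct and is the standard textbook argument: the first-order gradient inequality for convex functions gives the ``if'' direction, and the feasible-segment, one-sided directional-derivative argument gives the ``only if'' direction. The paper itself states this lemma without proof, importing it as a known fact from convex analysis used in \cite{perdomo2020performative}, so there is no competing approach to compare against; yours is exactly the canonical argument, with the hypotheses (convexity of $\Omega$ for feasibility of the segment, differentiability of $f$ for the gradient inequality and the limit) invoked in the right places.
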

\begin{lemma}
\label{lemma:k-r}
(Kantorovich-Rubinstein) A distribution map $\D(\cdot)$ is robustly $\eps$-sensitive if and only if for all $\theta, \theta' \in \Theta$:
$$ \sup \left\{ \left| \Ex_{Z \sim \D^*(\theta)} [g(Z)] - \Ex_{Z \sim \D^*(\theta')} [g(Z)] \right| : g: \reals^P \rightarrow \reals, g \  \text{1-Lipschitz} \right\} \leq \omega \eps ||\theta - \theta' ||_2$$
where $\D^*(\theta) = \argmax\limits_{Q: D_f(Q || \D(\theta)) \leq \rho} \Ex_{Z \sim Q} [\loss(Z; \theta)].$
\end{lemma}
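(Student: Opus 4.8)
The plan is to observe that this lemma is essentially a restatement of the classical \emph{Kantorovich--Rubinstein duality} theorem applied to the worst-case distributions $\D^*(\theta)$ and $\D^*(\theta')$. The crucial point is that the supremum on the left-hand side of the lemma is, by the very definition of the Wasserstein-1 distance, equal to $W_1(\D^*(\theta), \D^*(\theta'))$; once this identification is made, the stated equivalence collapses to the $W_1$ bound already packaged inside the robust $\eps$-sensitivity condition.

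First I would recall the Kantorovich--Rubinstein formula: for probability measures $\mu, \nu$ on a Polish space with finite first moments,
$$W_1(\mu, \nu) = \sup \left\{ \int g \, d\mu - \int g \, d\nu : g \text{ 1-Lipschitz} \right\}.$$
Because $-g$ is 1-Lipschitz whenever $g$ is, the supremum of the signed difference coincides with the supremum of its absolute value, so the left-hand side of the lemma is precisely $W_1(\D^*(\theta), \D^*(\theta'))$. I would then check the regularity hypotheses---that the sample space (here $\reals^P$) is Polish and that $\D^*(\theta), \D^*(\theta')$ carry finite first moments---so that the duality legitimately applies with $\mu = \D^*(\theta)$ and $\nu = \D^*(\theta')$.

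With this identification, both directions are immediate. In the forward direction, robust $\eps$-sensitivity (Definition \ref{ass:sensitivity}) already supplies $W_1(\D^*(\theta), \D^*(\theta')) \leq \omega \eps ||\theta - \theta'||_2$, and substituting the duality identity rewrites this bound as the asserted supremum inequality. In the reverse direction, a supremum bound of $\omega \eps ||\theta - \theta'||_2$ becomes, via duality, the inequality $W_1(\D^*(\theta), \D^*(\theta')) \leq \omega \eps ||\theta - \theta'||_2$; combined with the standing hypothesis that $\D(\cdot)$ is $\eps$-sensitive, this is precisely the operational bound that robust $\eps$-sensitivity provides.

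The main obstacle is not conceptual but measure-theoretic: one must confirm that duality genuinely applies to the reweighted measures $\D^*(\theta)$, which requires that the $\argmax$ defining $\D^*(\theta)$ is attained and that the resulting distribution inherits a finite first moment. Since $\D^*(\theta)$ is obtained by reweighting $\D(\theta)$ within an $f$-divergence ball of radius $\rho$, it is absolutely continuous with respect to $\D(\theta)$ with a controlled likelihood ratio, so finiteness of the first moment of $\D(\theta)$ transfers to $\D^*(\theta)$; settling these integrability details is the one place where care is needed before the lemma follows as a direct corollary of Kantorovich--Rubinstein duality.
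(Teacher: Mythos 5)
Your identification of the left-hand side with $W_1(\D^*(\theta), \D^*(\theta'))$ via classical Kantorovich--Rubinstein duality is exactly the intended content of this lemma: the paper itself offers no proof, importing the statement as the robust analogue of the duality lemma used in \cite{perdomo2020performative}, and your forward direction --- robust $\eps$-sensitivity gives $W_1(\D^*(\theta), \D^*(\theta')) \leq \omega W_1(\D(\theta), \D(\theta')) \leq \omega \eps ||\theta - \theta'||_2$, which duality converts into the supremum bound --- is precisely the only direction the paper ever uses, namely in the proof of Theorem \ref{theorem:rdro-convergence}. Your attention to the integrability hypotheses (the space being Polish, finite first moments transferring to $\D^*(\theta)$ through the likelihood ratio) addresses details the paper passes over silently.

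However, your reverse direction has a genuine gap, and it is worth naming because it shows the ``if and only if'' cannot be proved as literally stated. Duality converts the supremum bound only into the end-to-end inequality $W_1(\D^*(\theta), \D^*(\theta')) \leq \omega \eps ||\theta - \theta'||_2$, whereas Definition \ref{ass:sensitivity} demands the factored chain $W_1(\D^*(\theta), \D^*(\theta')) \leq \omega W_1(\D(\theta), \D(\theta')) \leq \omega \eps ||\theta - \theta'||_2$, and the first link does not follow from the end-to-end bound together with $\eps$-sensitivity of $\D(\cdot)$. Indeed, $\D^*(\theta)$ maximizes $\Ex_{Z \sim Q}[\loss(Z; \theta)]$ while $\D^*(\theta')$ maximizes $\Ex_{Z \sim Q}[\loss(Z; \theta')]$, so one can have $\D(\theta) = \D(\theta')$ (hence $W_1(\D(\theta), \D(\theta')) = 0$) yet $\D^*(\theta) \neq \D^*(\theta')$ because the loss, not the ball, changed; then no finite $\omega$ satisfies the intermediate inequality even though the supremum bound may hold comfortably. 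This defect is inherited from the lemma statement itself: in \cite{perdomo2020performative}, $\eps$-sensitivity is literally a $W_1$ bound on $\D(\theta), \D(\theta')$, so the equivalence is genuine there, but the robust version bundles in an extra structural condition that duality alone cannot recover. Both your argument and the lemma would be on firm ground restated as the one-directional implication (robust $\eps$-sensitivity implies the supremum bound), which is all that Theorem \ref{theorem:rdro-convergence} requires.
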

We now state our theorem. 
\begin{theorem} 
Suppose that the distributionally robust objective satisfies definitions \ref{ass:smoothness} and \ref{ass:convexity}, and that the distribution map satisfies definition \ref{ass:sensitivity}. Then the following statements are true:
\begin{enumerate}
    \item $||G(\theta) - G(\theta')||_2 \leq \omega \eps \frac{\beta}{\gamma} ||\theta - \theta'||_2$, for all $\theta, \theta' \in \Theta$ 
    \item If $\omega \eps < \frac{\gamma}{\beta}$, the iterates $\theta_t$ of RDRO converge to a unique robustly performatively stable point $\theta_{PS}$ at a linear rate: $||\theta_t - \theta_{PS}||_2 \leq \delta$ for $t \geq \left( 1 - \omega \eps \frac{\beta}{\gamma} \right)^{-1} \log \left(\frac{||\theta_{0} - \theta_{PS} ||_2}{\delta} \right).$
\end{enumerate}
Note that $G(\theta) := \argmin_{\theta \in \Theta} \sup_{Q \ll \D(\theta)} \{\Ex_{Z \sim Q} [\loss(Z; \theta)] : D_f(Q || \D(\theta)) \leq \rho\}. $
\end{theorem}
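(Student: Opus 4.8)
The plan is to show that the RDRO operator $G$ is a contraction on $\Theta$ and then invoke the Banach fixed-point theorem, mirroring the proof of Theorem \ref{theorem:rrm-convergence}. Write $RDPR(\theta, \theta') = \sup_{Q \ll \D(\theta)}\{\Ex_{Z \sim Q}[\loss(Z;\theta')] : D_f(Q \| \D(\theta)) \leq \rho\}$ for the robust decoupled performative risk, so that $G(\theta) = \argmin_{\theta' \in \Theta} RDPR(\theta, \theta')$. By Definition \ref{ass:convexity} the map $\theta' \mapsto RDPR(\theta, \theta')$ is $\gamma$-strongly convex, so its minimizer $G(\theta)$ exists and is unique on the convex set $\Theta$.

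First I would establish statement (1). Fix $\theta, \theta' \in \Theta$ and set $\theta_G = G(\theta)$, $\theta'_G = G(\theta')$. Applying the first-order optimality condition (Lemma \ref{lemma:optimality}) to each minimization gives $\nabla_{\theta'} RDPR(\theta, \theta_G)^T(\theta'_G - \theta_G) \geq 0$ and $\nabla_{\theta'} RDPR(\theta', \theta'_G)^T(\theta_G - \theta'_G) \geq 0$, where $\nabla_{\theta'}$ denotes the gradient in the second argument. Combining these with the $\gamma$-strong-convexity inequality $(\nabla_{\theta'} RDPR(\theta, \theta_G) - \nabla_{\theta'} RDPR(\theta, \theta'_G))^T(\theta_G - \theta'_G) \geq \gamma\|\theta_G - \theta'_G\|_2^2$ and cancelling the two optimality terms yields
\[ \gamma\|\theta_G - \theta'_G\|_2^2 \leq \left(\nabla_{\theta'} RDPR(\theta', \theta'_G) - \nabla_{\theta'} RDPR(\theta, \theta'_G)\right)^T(\theta_G - \theta'_G). \]
Cauchy--Schwarz then reduces the claim to bounding $\|\nabla_{\theta'} RDPR(\theta', \theta'_G) - \nabla_{\theta'} RDPR(\theta, \theta'_G)\|_2$ by $\omega\eps\beta\|\theta - \theta'\|_2$.

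The crux---and the step I expect to be the main obstacle---is this gradient bound, since it is where the supremum over the uncertainty set must be handled. The idea is to differentiate through the sup via an envelope (Danskin-type) argument, so that $\nabla_{\theta'} RDPR(\theta, \theta'_G) = \Ex_{Z \sim \D^*(\theta)}[\grad \loss(Z; \theta'_G)]$, where $\D^*(\theta)$ is the worst-case distribution from Definition \ref{ass:sensitivity}. The difference of gradients then becomes a difference of expectations of the same integrand $\grad \loss(\cdot\,; \theta'_G)$ under the two worst-case measures $\D^*(\theta)$ and $\D^*(\theta')$. By robust $\beta$-joint smoothness (Definition \ref{ass:smoothness}) this integrand is $\beta$-Lipschitz in $z$, so the Kantorovich--Rubinstein duality (Lemma \ref{lemma:k-r}) together with robust $\eps$-sensitivity (Definition \ref{ass:sensitivity}) bounds the difference by $\beta\, W_1(\D^*(\theta), \D^*(\theta')) \leq \beta\omega\eps\|\theta - \theta'\|_2$. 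Substituting back gives $\gamma\|\theta_G - \theta'_G\|_2 \leq \beta\omega\eps\|\theta - \theta'\|_2$, which is statement (1). The delicate point to verify here is that the envelope argument is legitimate---attainment and essential uniqueness of the maximizing $Q$, which the dual reformulation of Theorem \ref{theorem:shapiro:dual} supplies in the relevant regime---and that the maximizer's $\theta$-dependence is exactly what the robust sensitivity assumption controls; extra care is needed because $\D^*$ is defined through the loss at a single parameter whereas the envelope theorem produces the maximizer of the loss at $\theta'_G$.

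Finally, statement (2) follows from (1) by a standard contraction argument. When $\omega\eps < \gamma/\beta$ the Lipschitz constant $\omega\eps\beta/\gamma$ is strictly below $1$, so $G$ is a contraction and the Banach fixed-point theorem gives a unique fixed point $\theta_{PS}$; by Definition \ref{def:rob-perf-stability} this fixed point is precisely a robustly performatively stable model. Iterating the contraction gives $\|\theta_t - \theta_{PS}\|_2 \leq (\omega\eps\beta/\gamma)^t \|\theta_0 - \theta_{PS}\|_2$, and using $-\log x \geq 1 - x$ for $x \in (0,1)$ to convert this geometric decay into the stated iteration count completes the proof.
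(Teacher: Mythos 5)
Your proposal is correct and follows essentially the same route as the paper's proof: strong convexity plus first-order optimality (Lemma \ref{lemma:optimality}) reduces the contraction bound to a gradient-perturbation estimate, which is controlled via Kantorovich--Rubinstein duality (Lemma \ref{lemma:k-r}) together with robust $\eps$-sensitivity and $\beta$-joint smoothness, and part (2) is the standard Banach fixed-point iteration with $\log x \leq x - 1$. The only notable differences are cosmetic and in your favor: you phrase the strong-convexity step through gradient monotonicity rather than the paper's pair of function-value inequalities, and you explicitly flag the Danskin/envelope subtlety (that $\D^*(\theta)$ is defined via the loss at $\theta$ while the envelope derivative involves the maximizer at the loss evaluated at $G(\theta')$) --- a gap the paper's proof silently glosses over when it asserts that $G(\theta)$ is the minimizer of $f(\xi)=\Ex_{Z\sim\D^*(\theta)}[\loss(Z;\xi)]$.
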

\begin{proof}
Fix $\theta, \theta' \in \Theta$. Let 
$$\D^*(\theta) = \argmax\limits_{Q: D_f(Q || \D(\theta)) \leq \rho} \Ex_{Z \sim Q} [\loss(Z; \theta)].$$ 
That is, $\D^*(\theta)$ is the distribution within the $f$-divergence ball centred at $\D(\theta)$ with radius $\rho$ that maximizes the expected loss. Further, let 
$$f(\xi) = \Ex_{Z \sim \D^*(\theta)} [\loss(Z; \xi)] \quad \text{and} \quad f'(\xi) = \Ex_{Z \sim \D^*(\theta')} [\loss(Z; \xi)].$$
That is, $f(\xi)$ and $f'(\xi)$ are the worst case losses for $f$-divergence balls centred at $\theta$ and $\theta'$ respectively with radius $\rho$. 

We now use Definition \ref{ass:convexity} and Definition \ref{ass:smoothness} to ensure that $f$ is $\gamma$-strongly convex and that the gradient of $f$ exists and is $\beta$-jointly smooth. With our assumption of $\gamma$-strong convexity of $f$, $G(\theta)$ is the unique minimizer of $f(x)$ and we have the following two inequalities
\begin{align}
    f(G(\theta)) - f(G(\theta')) &\geq (G(\theta) - G(\theta'))^T \nabla f(G(\theta')) + \frac{\gamma}{2} ||G(\theta) - G(\theta')||_2^2  \label{ineq:3.1}\\
    f(G(\theta')) - f(G(\theta)) &\geq \frac{\gamma}{2} ||G(\theta) - G(\theta')||_2^2 \label{ineq:3.2}
\end{align}
Inequality (\ref{ineq:3.1}) comes from the definition of $\gamma$-strong convexity and inequality (\ref{ineq:3.2}) comes from  $\gamma$-strong convexity and the first-order optimality condition since Lemma \ref{lemma:optimality} tells us $(G(\theta') - G(\theta))^T \nabla f(G(\theta)) \geq 0$ because $G(\theta)$ is the minimizer of $f(x)$. 

Using these two inequalities we can derive the following
\begin{align*}
    -\gamma ||G(\theta) - G(\theta')||_2^2 &\geq f(G(\theta)) - f(G(\theta')) - \frac{\gamma}{2} ||G(\theta) - G(\theta')||_2^2 \\
     &\geq (G(\theta) - G(\theta'))^T \nabla f(G(\theta'))
\end{align*}
where we get the first inequality from (\ref{ineq:3.2}) and the second from (\ref{ineq:3.1}).

Now, using $\beta$-joint smoothness in $z$ and Cauchy-Schwarz we get
\begin{align*}
   ||(G(\theta) - G(\theta'))^T \grad \loss(z; G(\theta')) &- (G(\theta) - G(\theta'))^T \grad \loss(z'; G(\theta')) ||_2 \\ &\leq  ||G(\theta) - G(\theta')||_2 \beta ||z - z'||_2
\end{align*}
That is, $(G(\theta) - G(\theta'))^T \grad \loss(z; G(\theta')$ is $||G(\theta) - G(\theta')||_2 \beta$-Lipschitz in $z$. We will now use this and Kantorovich-Rubinstein (Lemma \ref{lemma:k-r}). Let
\begin{align*}
   g(z) = \frac{(G(\theta) - G(\theta'))^T \grad \loss(z; G(\theta'))}{||G(\theta) - G(\theta')||_2 \beta}
\end{align*}
The function $g(z)$ is 1-Lipschitz in $z$ because we have just divided $(G(\theta) - G(\theta'))^T \grad \loss(z; G(\theta'))$ by its Lipschitz constant. From Lemma \ref{lemma:k-r} and Definition \ref{ass:sensitivity} we have the following, with $g(Z)$ as defined above
\begin{align*}
   \Ex_{Z \sim \D^*(\theta)}[g(Z)] - \Ex_{Z \sim \D^*(\theta')}[g(Z)] \leq \omega \eps ||\theta - \theta'||_2
\end{align*}
Using linearity of expectation and multiplying by $||G(\theta) - G(\theta')||_2 \beta$ we get the following
\begin{align*}
    (G(\theta) - G(\theta'))^T \nabla f(G(\theta')) &- (G(\theta) - G(\theta'))^T \nabla f'(G(\theta'))  \\ 
   &\geq - \omega \eps \beta ||G(\theta) - G(\theta')||_2 ||\theta - \theta'||_2
\end{align*}
Now again using Lemma \ref{lemma:optimality}, we have $(G(\theta) - G(\theta'))^T \nabla f'(G(\theta')) \geq 0$, hence $(G(\theta) - G(\theta'))^T \nabla f(G(\theta')) \geq \omega \eps \beta ||G(\theta) - G(\theta')||_2 ||\theta - \theta'||_2$. From our work above we showed that $-\gamma ||G(\theta) - G(\theta')||_2^2 \geq (G(\theta) - G(\theta'))^T \nabla f(G(\theta'))$. Putting this all together we get
\begin{align*}
    -\gamma ||G(\theta) - G(\theta')||_2^2 \geq - \omega \eps \beta ||G(\theta) - G(\theta')||_2 ||\theta - \theta'||_2
\end{align*}
We rearrange the above to get 
\begin{align*}
   ||G(\theta) - G(\theta')||_2 \leq \omega \eps \frac{\beta}{\gamma} ||\theta - \theta'||_2
\end{align*}
which proves claim (1) of the theorem.

Claim (2) follows easily. We note that $\theta_t = G(\theta_{t-1})$ from the definition of RDRO and $G(\theta_{PS}) = \theta_{PS}$ by the definition of robust performative stability. Using the result of claim (1) we get 
\begin{align*}
   ||\theta_t - \theta_{PS}||_2 \leq \omega \eps \frac{\beta}{\gamma} ||\theta_{t-1} - \theta_{PS}||_2 \leq \left(\omega \eps \frac{\beta}{\gamma} \right)^t ||\theta_0 - \theta_{PS}||_2
\end{align*}
Now we set 
\begin{align*}
    \left(\omega \eps \frac{\beta}{\gamma} \right)^t ||\theta_0 - \theta_{PS}||_2 \leq \delta
\end{align*}
and solve for $t$.
\begin{align*}
     \left(\omega \eps \frac{\beta}{\gamma} \right)^t ||\theta_0 - \theta_{PS}||_2 &\leq \delta \\
     t \log \left(\omega \eps \frac{\beta}{\gamma} \right) + \log (||\theta_0 - \theta_{PS}||_2) &\leq \log(\delta) \\
     t \log \left(\omega \eps \frac{\beta}{\gamma} \right) &\leq \log(\delta) - \log (||\theta_0 - \theta_{PS}||_2) \\
     t \log \left(\omega \eps \frac{\beta}{\gamma} \right) \leq t \left(\omega \eps \frac{\beta}{\gamma} - 1 \right) &\leq \log(\delta) - \log (||\theta_0 - \theta_{PS}||_2) \\
     t &\geq \left(\log(\delta) - \log (||\theta_0 - \theta_{PS}||_2) \right) \left(\omega \eps \frac{\beta}{\gamma} - 1 \right)^{-1} \\
      t &\geq \left(1 - \omega \eps \frac{\beta}{\gamma} \right)^{-1}  \log \left(\frac{||\theta_0 - \theta_{PS}||_2}{\delta} \right) 
\end{align*}
Note that this theorem is essentially just an application of the Banach fixed point theorem as $ \omega \eps < \frac{\gamma}{\beta} \implies \omega \eps \frac{\beta}{\gamma} < 1$.
\end{proof}
   
While this proof provides us with conditions under which RDRO converges, it is somewhat unsatisfying as it leaves as an open question what is required for Definitions \ref{ass:smoothness}, \ref{ass:convexity}, and \ref{ass:sensitivity} to be true. Unlike for performative prediction with risk minimization, making assumptions on our loss function does not guarantee that the distributionally robust objective will share those properties. In order to have smoothness of our objective and also robust $\eps$-sensitivity of the distribution map we require some sort of stability or regularity of the worst case distributions as the centre of the $f$-divergence ball moves as a function of $\theta$.

The robust $\eps$-sensitivity condition could possibly be equivalently expressed as a smoothness condition for the $f$-divergence ball surrounding the data generating distribution. Further exploration of these questions would likely yield further understanding of the distributionally objective, even outside of the context of performative prediction.

It could also be possible that robust $\beta$-joint smoothness is not a necessary condition for convergence of RDRO and that a convergence proof may be possible working with subgradients or directional derivatives, but if this is the case it would necessitate a different, and likely more involved, proof technique than the one used here. An alternative approach would also be to work with the dual reformulation of the DRO problem given in Theorem \ref{theorem:duchi-dual} or to use an alternative probability distance measure rather than $f$-divergence balls that may be more amenable to this type of analysis. 

These approaches come with their own complications, however, and likely require answering similar questions. We believe these are interesting questions worthy of further research, but their investigation is beyond the scope of this work. Instead, we present empirical work which demonstrates the convergence of RDRO in a variety of scenarios.

\subsection{Implementation}
\label{sec:implementation}
We provide a brief explanation of the implementation details for ERM and DRO. Both risk minimization and distributionally robust optimization require access to the data generating distribution, \textit{i.e.} infinite data, which is obviously impossible, so we perform empirical risk minimization and utilize the plug-in estimator for distributionally robust optimization instead. Further, for DRO we make use of the dual formulation (Theorem \ref{theorem:duchi-dual}) and perform optimization on this objective rather than working with the primal form. Following \cite{duchi2021learning} and \cite{hashimoto2018fairness}, we use $\chi^2$-divergence balls in our implementation.

In all of our experiments we use linear models trained with gradient descent. We use a fixed step-size and a fixed number of epochs for training. The logistic regression utilizes an L2-regularized cross-entropy loss function.

The dual formulation of the distributionally robust objective allows for a simple training procedure where we treat the dual variable $\eta$ as a hyperparameter. Recall that for convex losses, $\loss(\theta; Z)$, the dual formulation is jointly convex in $(\theta, \eta)$. The training procedure is thus as follows, for a given $\eta$, compute the approximate minimizer $\widehat{\theta_{\eta}}$ 
\begin{align*}
     \minimize_{\theta \in \Theta} \Ex (\loss(\theta; Z) - \eta)_+^2.
\end{align*}
Because the dual is convex in $\eta$ we can use binary search to find the optimal dual variable $\eta^*$. The loss $(\loss(\theta; Z) - \eta)_+^2$ is merely the ReLu function applied to the usual loss with $\eta$ subtracted and then squared which allows us to train models using gradient descent methods. Hence, using binary search we train models with different values of $\eta$ until we find the optimal $\eta^*$. The optimal value, $\eta^*$, depends on the data and also the radius of our $f$-divergence ball, $\rho$. The value of $\rho$ is a hyperparameter that we must specify before training our model.

\subsection{Experiments}
\label{sec:experiments}
\subsubsection{RRM and RDRO for Credit Dataset}
Our first experiment reproduces the experimental work from \cite{perdomo2020performative}. We use a dataset from a Kaggle competition titled \textit{Give Me Some Credit}. The dataset contains relevant information for predicting credit scores. The target variable is a  binary variable indicating whether or not an individual has experienced financial distress in the past two years. The original dataset contains historical information for 250,000 borrowers. 

Following \cite{perdomo2020performative} we balance the dataset to contain an equal number of positive and negative cases for the target variable and normalize predictor variables to have a mean of zero and variance of one. The reduced dataset contains 18,358 entries. 

We train L2-regularized logistic regression models on the data for both ERM and DRO. Both models are trained with stochastic gradient descent with a fixed step-size of $\alpha = 0.03$ for 5000 epochs. The step-size and number of epochs were chosen empirically to approximately replicate performance from \cite{perdomo2020performative} on the base distribution. Additionally, a fixed value of $\rho$ was chosen as a radius of the $\chi^2$-divergence ball for DRO. The value of $\rho$ was chosen so that the accuracy of the DRO model on the full dataset was significantly, but not drastically, different than that of the ERM model.  

In order to add a performative element to the prediction problem, we identify 3 of the 10 features as strategic features which will be altered as a function of the parameters of our model. Strategic classification is a particular instance of performative prediction in which individuals adversarially adjust their features to maximize the likelihood of classification to the positive class. Strategic classification has been explored in relation to fairness concerns in previous work \cite{hardt2016strategic, bruckner2012static, miller2020strategic, milli2019social, damour2020fairness, hu2019disparate}.

As described in \cite{perdomo2020performative} and \cite{hardt2016strategic}, we assume individuals have linear utilities $u(\theta, x) = -\langle \theta, x \rangle$ and quadratic costs $c(x', x) = \frac{1}{2\eps}||x' - x||_2^2$. The constant $\eps$ controls the cost individuals incur by altering their features. Individuals thus pay a cost to manipulate their features in order to minimize the likelihood of the model predicting that they will default on their loan, but are unable to change the true outcome, $y \in \{0, 1\}$, of whether or not they default. Given linear utilities and quadratic costs as described here, the individuals' best response is to manipulate their features as
$$x'_S = x_S - \eps \theta_S,$$
where $x_S, x'_S \ \reals^{|S|}$ and $|S|$ is the number of strategic features. The explanation of why this distribution map is $\eps$-sensitive can be found in \cite{perdomo2020performative}.

The procedure for updating the data according to the distribution map for strategic classification is explained in the box below. \hfill \break

\noindent\fbox{
\parbox{\textwidth}{
\textbf{Input:} Base distribution $P$, a classifier $f_{\theta}$, a cost function $c$, a utility function $u$.
\textbf{Sampling procedure for $\D(\theta)$:}
\begin{enumerate}
    \item Sample $(x,y) \sim P$
    \item Compute best response $x_{BR} \leftarrow \argmax_{x'} u(x', \theta) - c(x', x)$
    \item Output sample $(x_{BR}, y)$
\end{enumerate}
}
}
\hfill \break
Using a logistic regression classifier and the strategic classification distribution map sampling procedure outlined above, we run ERM and DRO for 30 iterations on our dataset with values of $\eps \in \{0.1, 1, 10, 100\}$. We observe similar convergence behaviour for both ERM and DRO. As the value of $\eps$ grows, the inequality $\eps < \frac{\gamma}{\beta}$ no longer holds, meaning that the conditions of Theorems \ref{theorem:rrm-convergence} and \ref{theorem:rdro-convergence} are not satisfied and we do not necessarily have a contraction mapping. We plot the normalized distance between values of $\theta$ for successive iterations of ERM and DRO in Figures \ref{fig:erm-theta-gap} and \ref{fig:dro-theta-gap}. The distance between iterates is calculated as
\begin{align*}
    \frac{1}{||\theta_{S}||_2} \cdot ||\theta_{t+1} - \theta_{t}||_2,
\end{align*}
where $\theta_{S}$ is the value of $\theta_0$ on the strategic features.

\subsubsection{Building Intuition Through Simple Examples} \label{sec:simple-example}
    We now investigate the convergence of ERM and DRO on simple synthetic datasets to build intuition for the behaviour of the two algorithms. We move away from the assumptions required for our general theoretical guarantees for convergence to a fixed point and experiment with distribution maps that are not necessarily $\eps$-sensitive. We begin with a regression problem and then investigate a classification problem. 
    
    \textbf{Regression}
    
    We start with a simple mean prediction task with data sampled from a mixture of two univariate Gaussians, $X_A \sim \gaussian (\mu_A, \sigma^2)$ and $X_B \sim \gaussian (\mu_B, \sigma^2)$. 80\% of the data is sampled from $X_A$ and 20\% from $X_B$, \textit{i.e.} $X = 0.2X_A + 0.8X_B$. We train a linear regression model using gradient descent with backtracking line search to predict the mean of the data. We initialize our data with values $\mu_A = 4$, $\mu_B = 4$, and $\sigma^2 = 0.01$. We choose a small value of $\sigma^2$ so that the variance and finite samples have only a small impact on the convergence behaviour of ERM and DRO. We select an $f$-divergence ball radius of $\rho = 4$ for DRO. 
    
    We use $\theta$ to denote the learned parameter of the model, $\mu$ to denote the true mean of the data generating distribution, and $\mu_A$ and $\mu_B$ to denote the true means of the $X_A$ and $X_B$. In other words, $\theta = \hat{\mu}$ is the estimated mean from the model. Where necessary, we indicate with subscripts ERM and DRO to indicate which model we are referring to.
    
    We investigate three different distribution maps which we call $\D_0$, $\D_1$, and $\D_2$. For each map the means of the normal distributions from which we sample are adjusted as a function of $\theta$. Hence, the induced distribution is 
    $$\D_i(\theta) = 0.2\gaussian (\D_i^A(\theta), \sigma^2) + 0.8\gaussian (\D_i^B(\theta), \sigma^2).$$
    These distribution maps were chosen because they are simple to understand, but reveal important differences in the way ERM and DRO behave when their predictions alter the distribution on which they are learning. The distribution maps are specified in Table \ref{table:regression-exp1}.
    \begin{table}[h!]
    \begin{center}
    \begin{tabular}{|c |c| c |} 
    \hline
        $\D_0(\cdot)$ & $D_1(\cdot)$ & $\D_2(\cdot)$\\  
    \hline\hline
        $\D_0^A(\theta) = \gaussian(\theta, \sigma^2)$ & $\D_1^A(\theta) = \gaussian(\mu_A, \sigma^2)$ &  $\D_2^A(\theta) = \gaussian(2\theta, \sigma^2)$  \\ 
    \hline
         $\D_0^B(\theta) = \gaussian(\frac{\theta}{2}, \sigma^2)$  & $\D_1^B(\theta) = \gaussian(\frac{\theta}{2}, \sigma^2)$ & $\D_2^B(\theta) = \gaussian(\frac{\theta}{2}, \sigma^2)$  \\
    \hline
    \end{tabular}
    \end{center}
    \caption{Distribution maps for mean-prediction experiment.}
    \label{table:regression-exp1}
    \end{table}

    The distribution map determines the evolution of the distribution over the data at each iteration of deploying and learning our model, which in turn changes the learned $\theta_t = \hat{\mu_t}$. Despite starting from the same initial distribution in each case, the induced distributions vary widely. This demonstrates the importance of performativity and illustrates why a static supervised learning framework fails to adequately capture the complex dynamics of prediction problems which involve performativity.
    
    We run the experiment for 50 iterations, where one iteration involves training a model on data sampled from the current distribution and then updating the distribution via the distribution map. We summarize the evolution of the learned parameter $\theta_t$ over time for both ERM and DRO in Figures \ref{fig:regression-erm} and \ref{fig:regression-dro}. Given the simplicity of the learning problem, the learned values of $\theta$ closely approximate true mean of the distribution, $\mu$, over time. The approximate values to which ERM and DRO converge are given in Table \ref{table:regression-exp2}.
    \begin{figure} 
    \centering
    \begin{minipage}{0.45\textwidth}
        \centering
        \includegraphics[width=0.99\textwidth]{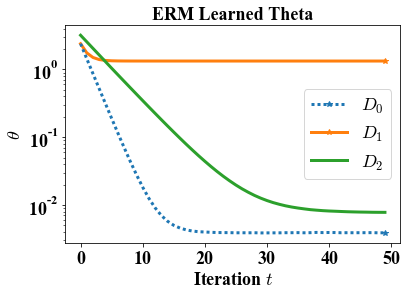} 
        \caption{Learned values of $\theta$ for ERM.}
        \label{fig:regression-erm}
    \end{minipage}\hfill
    \begin{minipage}{0.45\textwidth} 
        \centering
        \includegraphics[width=0.99\textwidth]{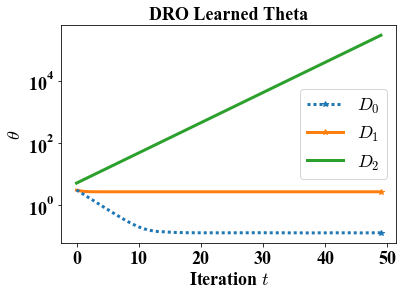} 
        \caption{Learned values of $\theta$ for DRO.}
        \label{fig:regression-dro}
    \end{minipage}
    \end{figure}

     It is interesting to note that even with this simple mean prediction task we observe significant differences in the behaviour of ERM and DRO over time. For all three distribution maps we observe ERM's tendency to focus on the majority group over the minority group. Recall that 20\% of the data is sampled from $X_A$ and 80\% from $X_B$ so we refer to group A as the minority group and group B as the majority group. For $\D_0$, ERM quickly converges toward zero as group B's mean evolves as $\mu_B = \frac{\theta}{2}$. Interestingly, however, it does not converge to zero, but instead remains fixed at approximately $\theta = 0.004$ even though $\theta = 0$ is the performatively optimal value. 
     
     DRO displays similar behaviour for this distribution map, but does not converge as close to zero. As noted, performative stability and performative optimality are distinct solution concepts, and performatively optimal points only coincide with performatively stable points in specific settings. 
    \begin{table}[h!]
    \begin{center}
    \begin{tabular}{|c|c c c |} 
    \hline
        Method & $\D_0(\cdot)$ & $D_1(\cdot)$ & $\D_2(\cdot)$\\  
    \hline\hline
        ERM & $\theta_{ERM} = 0.004$ & $\theta_{ERM} = 1.33$ & $\theta_{ERM} = 0.008$  \\ 
    \hline
         DRO & $\theta_{DRO} = 0.128$  & $\theta_{DRO} = 2.66$ & $\theta_{DRO} = \infty$  \\
    \hline
    \end{tabular}
    \end{center}
    \caption{Values to which $\theta$ converges for ERM and DRO.}
    \label{table:regression-exp2}
    \end{table}

    The second distribution map, $\D_1$, reveals differences in ERM and DRO that are relevant for fairness. For $\D_1$, the mean of the majority group again evolves as $\mu_B = \frac{\theta}{2}$, but this time the minority group's mean remains unchanged with $\mu_A = 4$. As ERM averages the loss over all data points, it converges to a predicted mean much closer to $\mu_B$ than to $\mu_A$. DRO, on the other hand, converges to a value that balances performance on prediction of the means of both minority and majority groups. In fact the distance from the true means of group A and group B is almost identical.
    \begin{align*}
        |\theta_{DRO} - \mu_A| &= |2.66 - 4| = 1.34 \\
        |\theta_{DRO} - \mu_B| &= |2.66 - 1.33| = 1.33
    \end{align*}

    ERM, however, is a much better predictor of the global mean of the data. This simple example illustrates a trade-off we can expect between ERM and DRO in terms of fairness versus global performance.
    \begin{align*}
        \mu_{ERM} &= 0.2\mu_A + 0.8\mu_B = (0.2)(4) + (0.8)(0.665) = 1.332 \\
        \mu_{DRO} &= 0.2\mu_A + 0.8\mu_B = (0.2)(4) + (0.8)(0.1.33) = 1.864
    \end{align*}

    The final distribution map, $\D_2$, demonstrates that ERM and DRO can behave entirely differently under some circumstances. For this distribution map ERM converges to a similar value to that under $\D_0$, that is $\theta_{ERM} = 0.008$. This makes some sense intuitively as $\mu_A = 2\theta$ whereas for $\D_0$ we had $\mu_A = \theta$. DRO, on the other hand, diverges with $\theta_{DRO}$ going to infinity. The reason for this is that for the first iteration DRO learns a value of $\theta_{DRO}$ that is larger than 4. Similarly, for each successive iteration the learned value of $\theta_{DRO}$ gets larger which in turn causes the true means of the data to grow as a function of the learned $\theta_{DRO}$. Interestingly, if you swap the majority and minority groups the behaviour of DRO remains nearly identical, while ERM diverges at a faster rate than DRO. This again demonstrates how DRO treats minority and majority groups similarly, while ERM learns a function that prioritizes performance on majority groups. We illustrate this in Figures \ref{fig:divergence-erm} and \ref{fig:divergence-dro}
    \begin{figure}
    \centering
    \begin{minipage}{0.45\textwidth}
        \centering
        \includegraphics[width=0.99\textwidth]{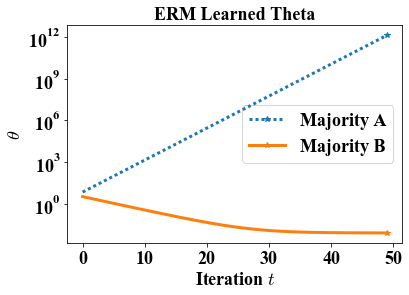} 
        \caption{Learned values of $\theta$ for ERM with $\D_2$.}
        \label{fig:divergence-erm}
    \end{minipage}\hfill
    \begin{minipage}{0.45\textwidth} 
        \centering
        \includegraphics[width=0.99\textwidth]{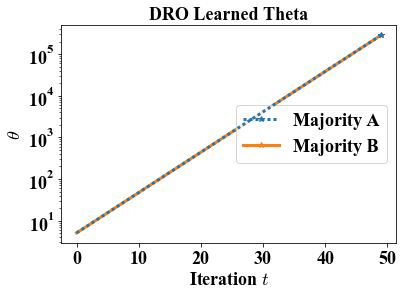} 
        \caption{Learned values of $\theta$ for DRO with $\D_2$.}
        \label{fig:divergence-dro}
    \end{minipage}
    \end{figure}

    \textbf{Classification}
    
    We now move to a classification task and analyze the behaviour of logistic regression classifiers trained using ERM and DRO. The classification task is more complex than the simple mean prediction task, so for this experiment we analyze only the static supervised learning setting in order to reduce complexity and elucidate the relevant differences between ERM and DRO. 
    
    Our data is generated from bivariate Gaussian distributions and the label, $y$, of a given data point is 1 if the sum of its features are greater than the sum of the means of the two Gaussian distributions from which we draw samples and 0 otherwise. As with the regression experiments, we have two subgroups within our data, A and B. We vary the proportion of samples from each subgroup in the experiments. Precisely, the data generating process is as follows:
    \begin{align*}
        X_A &\sim \gaussian (\pmb{\mu}_A, \pmb{\Sigma}_A) \\
        X_B &\sim \gaussian (\pmb{\mu}_B, \pmb{\Sigma}_B) \\
        X &= c_AX_A + c_BX_B \quad c_A, c_B \in (0,1), \ \text{and} \ c_A + c_B = 1  
    \end{align*}
    where 
    \begin{align*}
        \pmb{\mu}_A = \begin{bmatrix}
           \mu_A^1 \\
           \mu_A^2
         \end{bmatrix},
         &&
         \pmb{\mu}_B = \begin{bmatrix}
           \mu_B^1 \\
           \mu_B^2
         \end{bmatrix},
         &&
         \pmb{\Sigma}_A = \begin{bmatrix}
           \sigma_A^1 & 0\\
           0 & \sigma_A^2 
         \end{bmatrix},
         &&
         \pmb{\Sigma}_B = \begin{bmatrix}
           \sigma_B^1 & 0\\
           0 & \sigma_B^2 
         \end{bmatrix}.
    \end{align*}
    \vspace{0.15cm}
    And for a data point, $x = [x_i^1, x_i^2]^T$, with $i \in \{A, B\}$,
    \vspace{0.1cm}
    \begin{align*}
        y =  \begin{cases} 
      0 & \  \text{if}\ \  x_i^1 + x_i^2 \leq  \mu_i^1 + \mu_i^2\\
      1 & \  \text{if}\ \ x_i^1 + x_i^2 >  \mu_i^1 + \mu_i^2
   \end{cases}.
    \end{align*}
    Hence, if $\pmb{\mu}_A \not= \pmb{\mu}_B$, the data is not linearly separable and the logistic regression model must trade off performance across the two subgroups. We provide scatter plots of samples from the data generating distribution below with $\mu_A^i = 1$ and $\mu_B^i = 0.7$, $\sigma_A^i = \sigma_B^i = 0.1$ for $i \in \{0, 1\}$, and $c_A = 0.8$, $c_B = 0.2$. Figure \ref{fig:all-data-class} contains a sample of 360 data points coloured by the value of their target variable, with crosses representing data points belonging to group B and circles representing data points belonging to group A. Figure \ref{fig:group-data-class} contains a sample of 500 and 100 data points respectively from group A and group B coloured by the value of their target variable.
    \begin{figure} 
    \centering
        \includegraphics[width=0.7\textwidth]{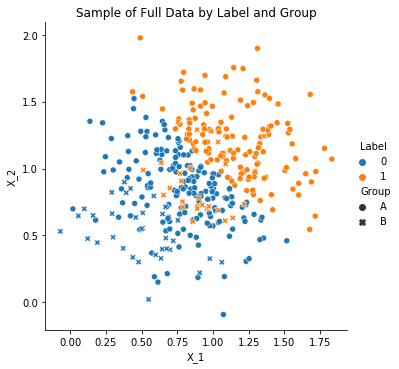} 
        \caption{Sample of 360 data points from the data generating distribution with $c_A = 0.8$ and $c_B = 0.2$.}
        \label{fig:all-data-class}
    \end{figure}
    \begin{figure} 
    \centering
        \includegraphics[width=0.9\textwidth]{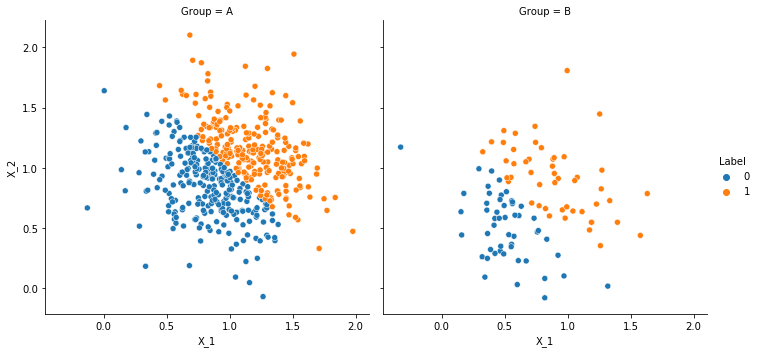} 
        \caption{Sample of 600 data points from the data generating distribution with $c_A = 0.8$ and $c_B = 0.2$.}
        \label{fig:group-data-class}
    \end{figure}

    In Figure \ref{fig:all-data-class} we can see that the data is not linearly separable, as some members of group B who belong to the positive class have features that place them lower than the threshold for positive classification for group A. Although this dataset is extremely simple, it is characterized by a feature that represents a central concern for fairness in machine learning, namely that the conditional probability distributions, $P(y|x)$, are significantly different for distinct subsets of the data. This example is intended to represent a simplified abstract instance of a population with majority and minority subgroups in order to see how the behaviour of ERM and DRO differ in this circumstance. 
    
    We generate three distinct datasets on which to train our algorithms, each made up of differing proportions of the two subgroups A and B. Each dataset contains a sample of 10,000 data points, with samples distributed according the values of $c_A$ and $c_B$. The accuracy of the models on the three datasets is summarized in Tables \ref{table:erm-acc-by-group} and \ref{table:dro-acc-by-group} below. For both ERM and DRO we use L2-regularized logistic regression trained with stochastic gradient descent. The step-size for all algorithms is fixed at 0.05 and we train for 15,000 epochs. 
    
    Models trained with the distributionally robust objective have the additional complication that we must specify a value for the radius of the $\chi^2$-divergence ball, \textit{i.e.} $\rho$. The larger the value of $\rho$, the more we can expect a DRO model to differ from an ERM model because as the $\chi^2$-divergence ball grows, the worst case distribution can be further and further from the data generating distribution. Conversely, in the limit as $\rho \rightarrow 0$, we recover ERM as the $\chi^2$-divergence ball shrinks to contain only the data generating distribution.
    
    Choosing the value of $\rho$ is a challenging decision, as the performance of a model varies significantly as $\rho$ changes. If one has access to demographic information, it is possible to conduct a grid search over possible $\rho$ values in order to find a value that results in a model with the desirable fairness properties. Doing this, however, largely defeats the purpose of DRO. As explained earlier, a central advantage to using DRO rather than some fairness constrained optimization technique is that DRO does not require access to demographic information. In this experiment we work directly with the dual formulation of DRO and set $\eta = 0.56$. This value was chosen empirically to achieve relatively uniform accuracy across group A and group B for an 80/20 split between the two subgroups. As the values of $c_A$ and $c_B$ change, we can see that the performance of DRO changes for a given value of $\eta$ and hence $\rho$, as $\eta^*$ depends on $\rho$. 
    \begin{table}[h!]
    \begin{center}
    \begin{tabular}{|c|c c c |} 
    \hline
        Group &  $[c_A = 0.6$, $c_B = 0.4]$ & $[c_A = 0.8$, $c_B = 0.2]$ & $[c_A = 0.95$, $c_B = 0.05]$\\  
    \hline\hline
        A & 0.797 & 0.907 & 0.966 \\ 
    \hline
        B & 0.701 & 0.652 & 0.592  \\
    \hline
        All Data & 0.759 & 0.856 & 0.948 \\
    \hline
    \end{tabular}
    \end{center}
    \caption{Accuracy by Group for ERM.}
    \label{table:erm-acc-by-group}
    \end{table}
    \begin{table}[h!]
    \begin{center}
    \begin{tabular}{|c|c c c |} 
    \hline
       Group &  $[c_A = 0.6$, $c_B = 0.4]$ & $[c_A = 0.8$, $c_B = 0.2]$ & $[c_A = 0.95$, $c_B = 0.05]$\\  
    \hline\hline
        A & 0.665 & 0.751  & 0.780  \\ 
    \hline
        B & 0.869 & 0.744  & 0.766  \\
    \hline
        All Data & 0.747 & 0.750  & 0.780 \\
    \hline
    \end{tabular}
    \end{center}
    \caption{Accuracy by Group for DRO.}
    \label{table:dro-acc-by-group}
    \end{table}

    We first examine the performance of ERM (Table \ref{table:erm-acc-by-group}). As the data is not linearly separable, ERM must learn a decision boundary that trades off performance between the two subgroups. Because the ERM objective treats the loss on each data point equally, the model learns a decision boundary that is more accurate for the majority group than for the minority group. This discrepancy in accuracy of predictions worsens the smaller the majority group is. For instance, when 95\% of the data comes from group A, the logistic regression model trained with an ERM objective achieves 96\% accuracy on group A members, but only 59.2\% accuracy on group B members. 
    
    Models trained with a DRO objective behave much differently. For the 80/20 split and 95/5 split, the DRO models learn relatively fair decision boundaries, effectively balancing performance on both subgroups A and B. For the 60/40 split, however, DRO actually learns a model that performs significantly better on the minority group than the majority group. This model is in a sense discriminatory against the majority group. This is the result of the radius of the $\chi^2$-divergence ball being too large and the model thus overly focusing on a worst case distribution.  
    
    As mentioned above, the correct choice of $\rho$ is not necessarily obvious, but it greatly impacts the performance of the model. Duchi \textit{et al.} provide some recommendations and heuristics for choosing values of $\rho$ in \cite{duchi2021learning}. Along with altering the value of $\rho$, one can change the $f$-divergence ball by varying the the value of $k$ for $f$-divergences in the Cressie-Read family. This has a similar effect to changing the value of $\rho$.    
    
    In Figures \ref{fig:dro-A-db}, \ref{fig:dro-B-db}, \ref{fig:erm-A-db}, and \ref{fig:erm-B-db} we plot data from groups A and B for data generated with $c_A = 0.95$, $c_B = 0.05$ with the learned decision boundary from a DRO and ERM model, respectively, overlaid. The background shading indicates the predicted label with blue representing $\hat{y} = 0$ and red representing $\hat{y} = 1$, while the colour of the data points indicate their true label.

    \begin{figure}[t]
    \centering
    \begin{minipage}[b]{0.45\textwidth}
        \centering
        \includegraphics[width=\textwidth]{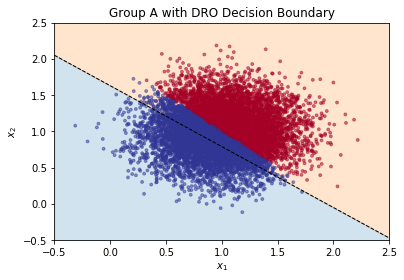}
        \caption{Group A, DRO.}
        \label{fig:dro-A-db}
    \end{minipage}
    \hfill
    \begin{minipage}[b]{0.45\textwidth}
        \centering
        \includegraphics[width=\textwidth]{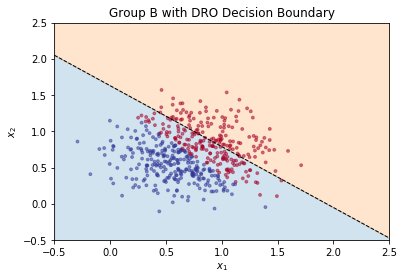}
        \caption{Group B, DRO.}
        \label{fig:dro-B-db}
    \end{minipage}
     \begin{minipage}[b]{0.45\textwidth}
        \centering
        \includegraphics[width=\textwidth]{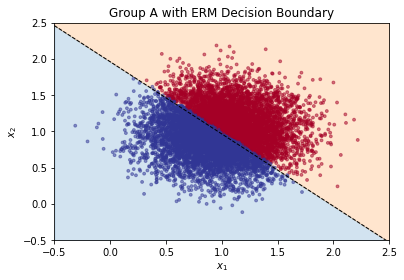}
        \caption{Group A, ERM.}
        \label{fig:erm-A-db}
    \end{minipage}
    \hfill
    \begin{minipage}[b]{0.45\textwidth}
        \centering
        \includegraphics[width=\textwidth]{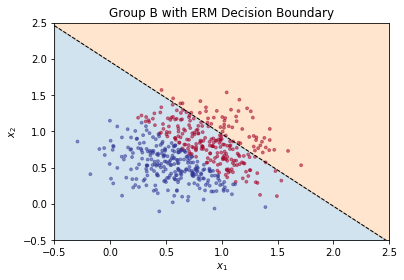}
        \caption{Group B, ERM.}
        \label{fig:erm-B-db}
    \end{minipage}
    \caption{Decision boundaries for ERM and DRO classifiers with samples from groups A and B with $c_A = 0.95$ and $c_B = 0.05$. Shading indicates predicted label and data point colour indicates true label.}
    \end{figure} 

    Neither the regression nor the classification experiments in this section are designed to be realistic representations of real-world applications, but rather are intended to provide a simple setting in which to investigate important differences in the behaviour of models trained with ERM versus DRO objectives. In the next section we investigate a slightly more complex classification task in the performative prediction setting and examine how DRO and ERM may impact fairness considerations when deploying a model influences the distribution on which it is making predictions.  

\subsubsection{Fairness and RDRO}
The data generating process is as follows:
\begin{align*}
    X_A &\sim \gaussian (\pmb{\mu}_A, \pmb{\Sigma}_A) \\
    X_B &\sim \gaussian (\pmb{\mu}_B, \pmb{\Sigma}_B) \\
    X &= c_AX_A + c_BX_B \quad c_A, c_B \in (0,1), \ \text{and} \ c_A + c_B = 1  
\end{align*}
where 
\begin{align*}
    \pmb{\mu}_A = \begin{bmatrix}
       \mu_A^1 \\
       \vdots \\
       \mu_A^{10}
     \end{bmatrix},
     &&
     \pmb{\mu}_B = \begin{bmatrix}
       \mu_B^1 \\
       \vdots \\
       \mu_B^{10}
     \end{bmatrix},
     &&
     \pmb{\Sigma}_A = \begin{bmatrix}
       \sigma_A^1 & \cdots & 0\\
       \vdots & \ddots & \vdots \\
       0 & \cdots & \sigma_A^{10} 
     \end{bmatrix},
     &&
     \pmb{\Sigma}_B = \begin{bmatrix}
       \sigma_B^1 & \cdots & 0\\
       \vdots & \ddots & \vdots \\
       0 & \cdots & \sigma_B^{10} 
     \end{bmatrix}.
\end{align*}
\vspace{0.15cm}
And for a data point, $x = [x_i^1, \dots, x_i^{10}]^T$, with $i \in \{A, B\}$,
\vspace{0.1cm}
\begin{align*}
    y =  \begin{cases} 
  0 & \  \text{if}\ \  x_i^1 + \cdots + x_i^{10} \leq  \mu_i^1 + \cdots + \mu_i^{10}\\
  1 & \  \text{if}\ \ x_i^1 + \cdots + x_i^{10} >  \mu_i^1 + \cdots + \mu_i^{10}
\end{cases}.
\end{align*}
For our experiment we set the parameters of the data generating process as
\begin{align*}
   \mu_A^i &= 1 && \sigma_A^i = 0.1 &&  c_A = 0.8\\
   \mu_B^i &= 0.8, && \sigma_B^i = 0.1, && c_B = 0.2. 
\end{align*}
These parameters were selected as an attempt to capture the notion that an underprivileged minority group may have features that make them appear to be less qualified, despite having the same proportion of qualified individuals as a dominant majority group. The data generating process obviously represents this at a high level of abstraction and is much less complex than most real-world applications. With that said, we believe the data generating process effectively encapsulates this abstracted characteristic that is central to concerns for learning fair models. Examples of the type of situation that this experiment is intended to represent are college admissions or hiring, where an underprivileged minority group may not, on average, have CVs that appear as impressive as their peers from the majority group due to a lack of opportunity, but are nevertheless equally qualified for the school or job.  

We once again examine a strategic classification scenario, as in experiment 1, but our data now contains subgroups, allowing us to analyze the impact of ERM and DRO on fairness and model performance in the performative setting. We examine four different distribution maps by varying the parameter $\eps \in \{0.01, 0.25, 0.5, 10\}$ and set 5 of the 10 features to be strategic (\textit{i.e.} manipulable). 

For both ERM and DRO we train L2-regularized logistic regression models with $\lambda = 0.0001$. We use stochastic gradient descent with a fixed step-size of $\alpha = 0.2$ and train for 8000 epochs on samples of 1,200 data points. We use a fixed radius $\rho$ of the $\chi^2$-divergence ball for DRO. All parameters were chosen empirically to give good performance on the base distribution.

First, we examine the convergence behaviour of both algorithms by plotting the normalized distance between successive iterates of the learned parameters, $\theta_t$, in Figures \ref{fig:fair-erm-theta-gap} and \ref{fig:fair-dro-theta-gap}. We observe that ERM does not converge for any value of $\eps$, while  DRO converges for only $\eps = 0.01$. It is unclear why the algorithms do not converge for other values of $\eps$. The failure to converge could be related to using a fixed, rather than decaying step-size, or because the conditions for the contraction mapping are not met (recall that the conditions are sufficient, but not necessary for convergence of a particular instance). For the three smaller values of $\eps$, both ERM and DRO converge to a small neighbourhood, whereas for $\eps = 10$, neither ERM nor DRO exhibit any convergence. 
\begin{figure}
\centering
\includegraphics[width=0.5\textwidth]{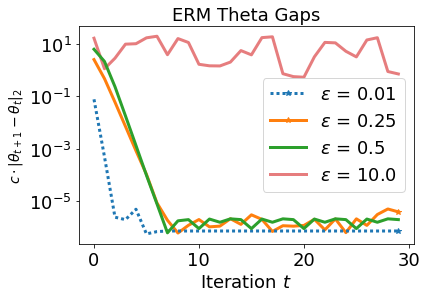}
\caption{Plot of the normalized distance between successive values of $\theta$ for ERM.}
\label{fig:fair-erm-theta-gap}
\end{figure}
\begin{figure}
\centering
\includegraphics[width=0.5\textwidth]{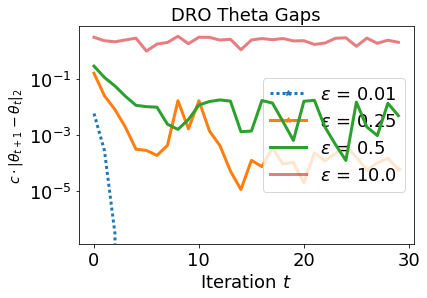}
 \caption{Plot of the normalized distance between successive values of $\theta$ for DRO.}
\label{fig:fair-dro-theta-gap}
\end{figure} 

To demonstrate the effect of the convergence, or lack thereof, of $\theta$ on the model's performance, we plot the average supervised and performative L2-regularized binary cross-entropy loss for ERM and DRO for $\eps = 0.5$ and $\eps = 10$ in Figure \ref{fig:perf-risks}. The blue lines indicate the optimization phase and the green lines indicate the effect of the distribution shift after the classifier deployment. That is, the dots at the end of a green dotted line represent performative loss, while the dots at the end of a blue line represent supervised loss. In the plots we can see that even though ERM and DRO do not converge for $\eps = 0.5$, the models quickly achieve relatively stable loss on the classification task. Note that the average L2-regularized binary cross-entropy loss is the objective for which ERM is optimizing, but not the objective for which DRO is optimizing. 
\begin{figure}
\centering
\begin{subfigure}
    \centering
    \includegraphics[width=0.7\textwidth]{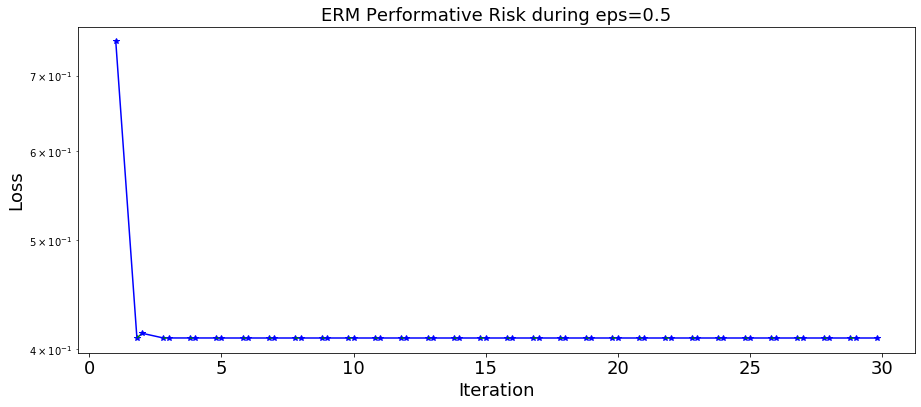}
\end{subfigure}
\hfill
\begin{subfigure}
    \centering
    \includegraphics[width=0.7\textwidth]{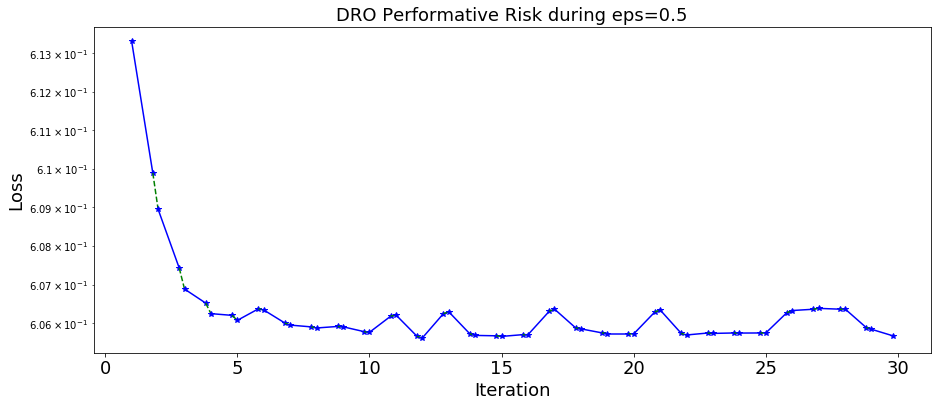}
\end{subfigure}
\begin{subfigure}
    \centering
    \includegraphics[width=0.7\textwidth]{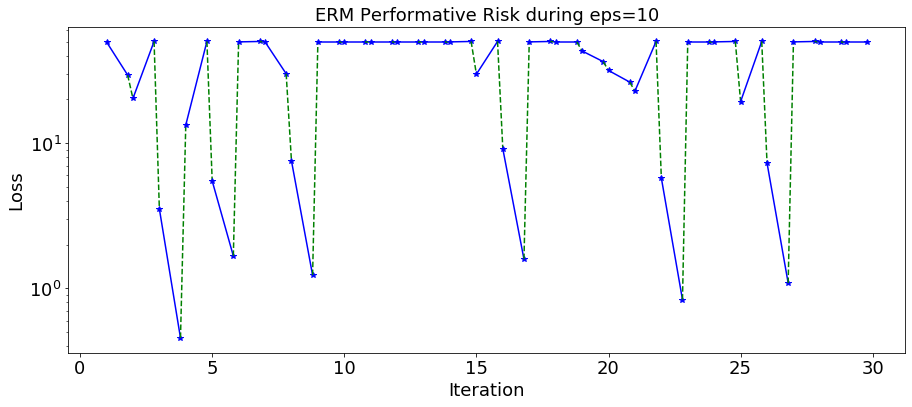}
\end{subfigure}
\hfill
\begin{subfigure}
    \centering
    \includegraphics[width=0.7\textwidth]{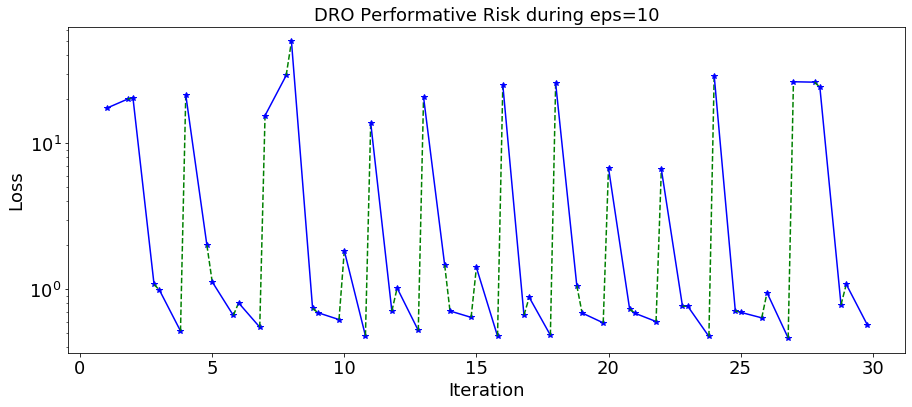}
\end{subfigure}
\caption{Plots of the average L2-regularized binary cross-entropy supervised and performative loss.}
\label{fig:perf-risks}
\end{figure} 

Given that this experiment is a balanced binary classification task, the metric in which we are principally interested is accuracy. Furthermore, as we are interested in comparing the fairness properties of DRO and ERM, it is important to analyze the performance of the models on the subgroups within the population, as well as the population as a whole. In Figures \ref{fig:erm-acc-0,5} and \ref{fig:dro-acc-0,5} we plot the the accuracy of the two models for $\eps = 0.5$. We see that the performative accuracy of ERM initially degrades significantly, before quickly converging to approximately 84\% on the full population. 

Conversely, the performative accuracy for DRO is actually initially higher than the accuracy on the distribution on which the model was trained, but it also converges relatively quickly to an accuracy of approximately 72.5\% on the full population. This is once again due to DRO having a different optimization objective than ERM. The average cross-entropy loss acts as a surrogate loss for the 0-1 loss and hence aims to maximize accuracy on the full dataset. The objective used for DRO focuses on the tails of the distribution and therefore does not maximize accuracy across the full dataset. The improvement of the DRO model's accuracy after the distribution shift is not an inherent property of DRO, but rather is specific to this particular dataset and distribution map. We see in Figure \ref{fig:erm-acc-0,5} at iteration $t=2$ that ERM also occasionally achieves better performance after distribution shift. In general, the improvement or degradation of accuracy depends on the learned parameters of the model, the distribution map, the data, and the loss function. 
\begin{figure}
\centering
\begin{subfigure}
    \centering
    \includegraphics[width=0.7\textwidth]{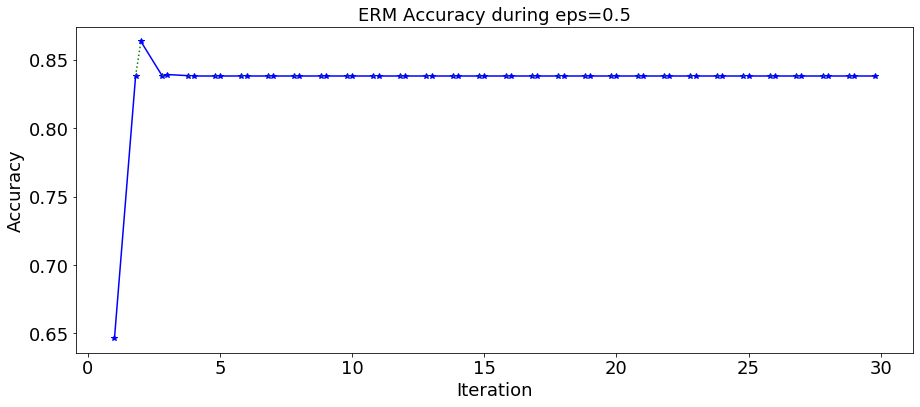}
    \caption{ERM}
    \label{fig:erm-acc-0,5}
\end{subfigure}
\begin{subfigure}
    \centering
    \includegraphics[width=0.7\textwidth]{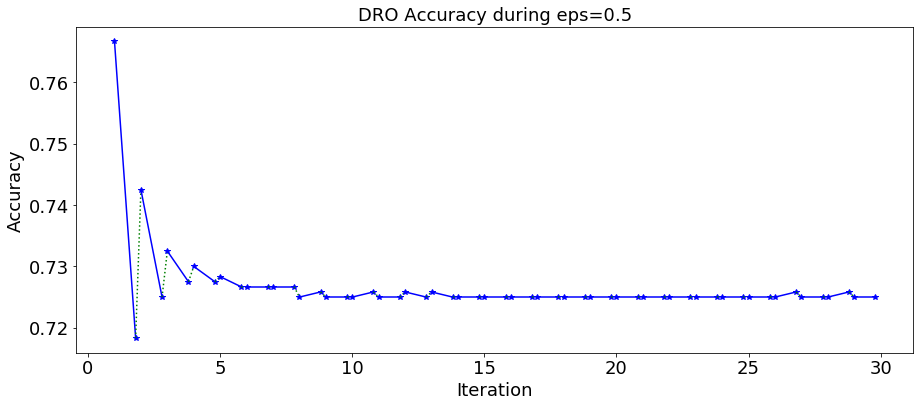}
    \caption{DRO}
    \label{fig:dro-acc-0,5}
\end{subfigure}
\caption{ERM and DRO accuracy on the full population across successive iterations.}
\end{figure} 
\begin{figure}
\centering
\begin{subfigure}
    \centering
    \includegraphics[width=0.7\textwidth]{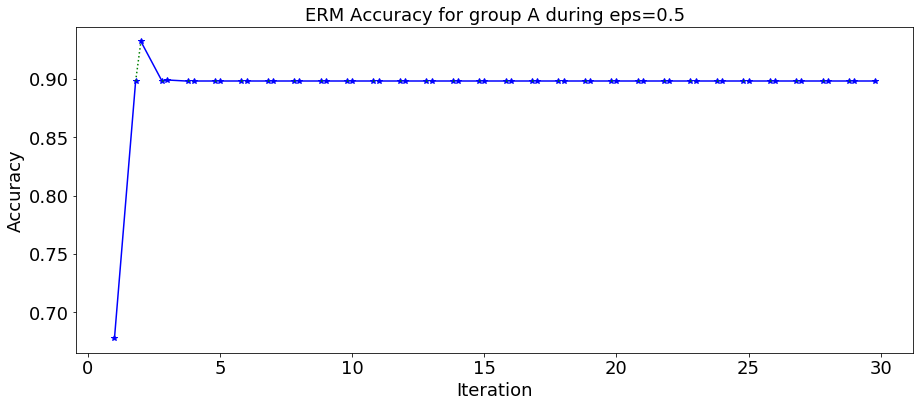}
\end{subfigure}
\hfill
\begin{subfigure}
    \centering
    \includegraphics[width=0.7\textwidth]{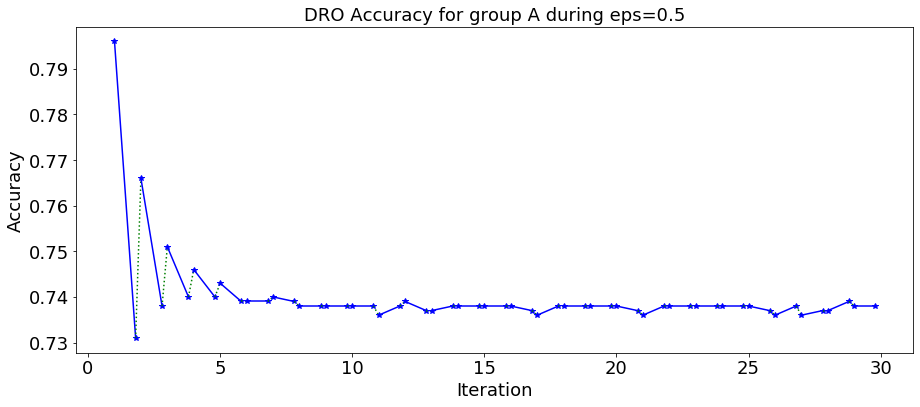}
\end{subfigure}
\begin{subfigure}
    \centering
    \includegraphics[width=0.7\textwidth]{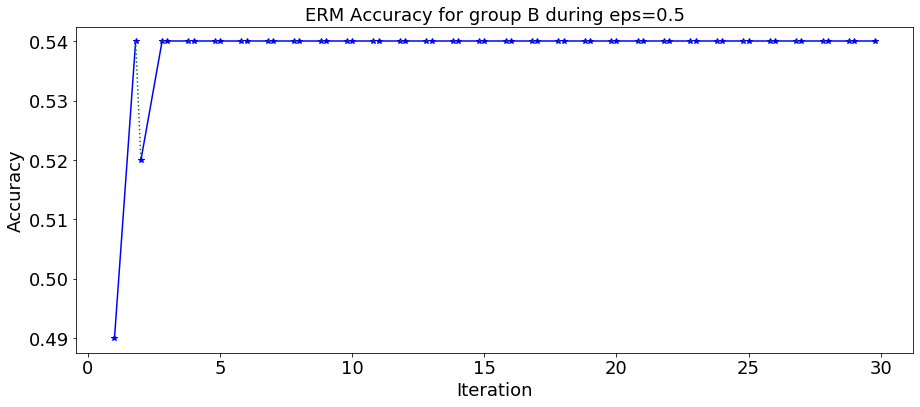}
\end{subfigure}
\hfill
\begin{subfigure}
    \centering
    \includegraphics[width=0.7\textwidth]{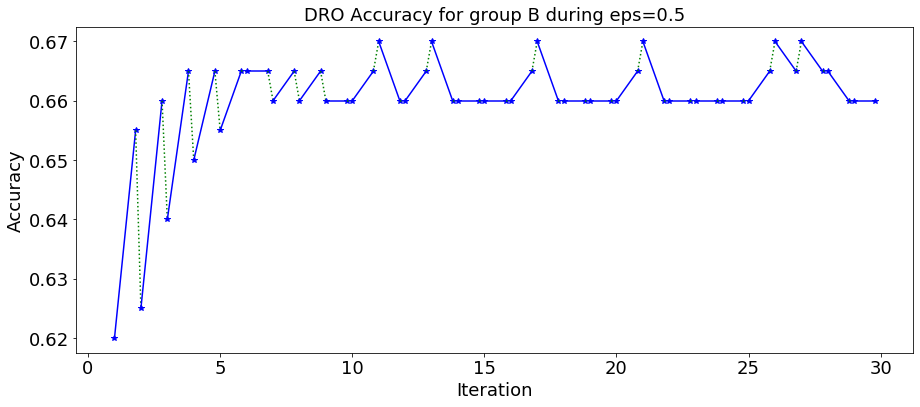}
\end{subfigure}
\caption{ERM and DRO accuracy on subgroups A and B across successive iterations.}
\label{fig:perf-accs}
\end{figure} 

In Figure \ref{fig:perf-accs} we plot the accuracy of ERM and DRO for the subgroups A and B for $\eps = 0.5$. We can see from these plots that the fairness properties of ERM and DRO are preserved in the performative prediction setting. 

ERM converges to an accuracy of approximately 90\% on group A and only approximately 54\% on group B. DRO, on the other hand, converges to much more equal accuracy across the two subgroups, achieving an accuracy of approximately 74\% for group A and approximately 66\% for group B. We summarize the accuracies to which ERM and DRO converge in Tables \ref{table:app-rrm-acc-by-group} and \ref{table:app-rdro-acc-by-group}. We do not include the accuracy for $\eps = 10$ because neither ERM nor DRO converged to a small enough neighbourhood, and therefore did not converge in accuracy. We see that the algorithms converge to similar accuracy values for all the values of $\eps$.  

This result, while perhaps not that surprising, is important, as it demonstrates that not only does DRO exhibit similar convergence behaviour to ERM, but DRO converges to fair fixed points, whereas ERM converges to discriminatory fixed points in the presence of heterogeneous data composed of minority and majority subgroups. Recall also that DRO is not given access to group information, but still learns to achieve more uniform performance across subgroups, as it is attempting to minimize the worst case loss across all probability distributions within the $\chi^2$-divergence ball surrounding the data generating distribution.  
\begin{table}[ht!]
\begin{center}
\begin{tabular}{|c|c c c |} 
\multicolumn{4}{c}{ERM Performative Accuracy} \\
\hline
    Group &  $\eps = 0.01$ &  $\eps = 0.25$ &  $\eps = 0.5$\\  
\hline\hline
    A & 0.893 & 0.896 & 0.898 \\ 
\hline
    B & 0.540 & 0.540 & 0.540  \\
\hline
    All Data & 0.834 & 0.837 & 0.838 \\
\hline
\end{tabular}
\end{center}
\caption{Accuracy by Group for ERM after 30 iterations.}
\label{table:app-rrm-acc-by-group}
\end{table}
\begin{table}[ht!]
\begin{center}
\begin{tabular}{|c|c c c |} 
\multicolumn{4}{c}{DRO Performative Accuracy} \\
\hline
   Group &   $\eps = 0.01$ &  $\eps = 0.25$ &  $\eps = 0.5$\\   
\hline\hline
    A & 0.687 & 0.710  & 0.738  \\ 
\hline
    B & 0.670 & 0.660  & 0.660  \\
\hline
    All Data & 0.684 & 0.701  & 0.725 \\
\hline
\end{tabular}
\end{center}
\caption{Accuracy by Group for DRO after 30 iterations.}
\label{table:app-rdro-acc-by-group}
\end{table}
%

\end{document}